\documentclass[10pt,twocolumn,letterpaper]{article}
\usepackage[pagenumbers]{cvpr}
\usepackage{graphicx}
\usepackage{amsmath,amssymb}
\usepackage{amsthm}
\usepackage{placeins}
\newtheorem{proposition}{Proposition}[section]
\newtheorem{corollary}{Corollary}[section]
\usepackage{microtype}
\AtBeginDocument{%
  \setlength{\abovedisplayskip}{7pt plus 2pt minus 5pt}%
  \setlength{\belowdisplayskip}{7pt plus 2pt minus 5pt}%
  \setlength{\abovedisplayshortskip}{0pt plus 3pt}%
  \setlength{\belowdisplayshortskip}{4pt plus 3pt minus 3pt}%
}
\usepackage{booktabs}
\usepackage{colortbl}
\definecolor{sgfgray}{gray}{0.93}

\definecolor{cvprblue}{rgb}{0.21,0.49,0.74}
\usepackage[pagebackref,breaklinks,colorlinks,allcolors=cvprblue]{hyperref}

\def\paperID{0000}
\def\confName{CVPR}
\def\confYear{2027}
\title{SGF+: Decoupling Gradient Flows for Autoregressive Video Generation}
\author{%
{\normalsize Zihan Su$^{1,2*}$\quad Junhao Zhuang$^{2*\dagger}$\quad Yaowei Li$^{2}$\quad Siwen Lu$^{1}$\quad Haoran Li$^{2}$\quad Lingen Li$^{3}$\quad Haoyu Wu$^{2}$}\\
{\normalsize Weiyang Jin$^{2}$\quad Songchun Zhang$^{2}$\quad Haoyang Huang$^{2}$\quad Chun Yuan$^{1\dagger}$\quad Zeyue Xue$^{2}$\quad Nan Duan$^{2}$}\\[3pt]
{\normalsize $^{1}$Tsinghua University\quad $^{2}$Joy Future Academy, JD\quad $^{3}$The Chinese University of Hong Kong}\\[2pt]
{\small $^{*}$Equal contribution.\quad $^{\dagger}$Corresponding authors.}\\[2pt]
{\normalsize \url{https://zihan-su.github.io/self-gradient-forcing-plus}}
}

\makeatletter
\g@addto@macro\@maketitle{%
  \vspace*{-26pt}%
  \begin{center}
    \includegraphics[width=\textwidth]{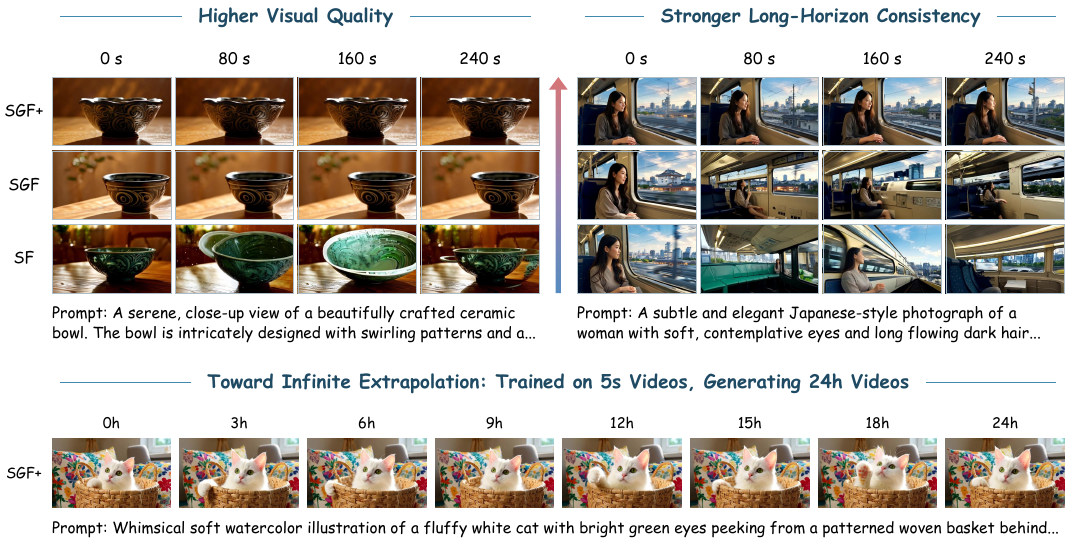}
    \vspace*{-10pt}
    \captionsetup{type=figure,skip=2pt}
    \captionof{figure}{\textbf{Qualitative comparison of autoregressive video generation methods.} Self Forcing (SF) blocks context gradients, preventing future predictions from supervising context writing and causing pronounced visual degradation over long rollouts. Self Gradient Forcing (SGF) restores context gradients and improves long-horizon consistency, but conflicts between context-writing and denoising updates keep both consistency and visual quality suboptimal. Self Gradient Forcing Plus (SGF+) decouples context writing and denoising, achieving the best visual quality and long-horizon consistency among the three methods. Trained with only 5s rollouts, SGF+ supports continuous generation for up to 24 hours without long-video fine-tuning.}
    \label{fig:teaser}
  \end{center}%
  \vspace*{6pt}%
}
\makeatother

\begin{document}
\maketitle

\begingroup
\makeatletter
\setlength{\@dblfptop}{0pt}
\setcounter{dbltopnumber}{1}
\makeatother
\begin{figure*}[t]
  \centering
  \includegraphics[width=\textwidth]{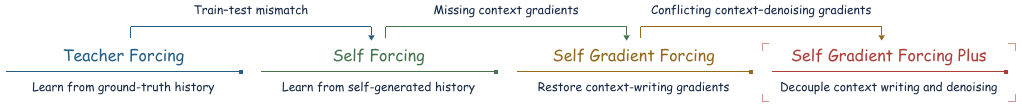}
  \caption{\textbf{Method lineage for autoregressive video generation.} SF addresses train--test history mismatch via self-generated rollouts. SGF restores context-writing gradients, while SGF+ decouples context writing from denoising to resolve their gradient conflict.}
  \label{fig:method-lineage}
\end{figure*}

\begin{abstract}
Autoregressive video generation requires denoising the current frames while writing their key-value representations as context for future predictions. However, these two roles typically share parameters, and we find that their gradients exhibit distinct patterns and systematic negative alignment, hindering the joint optimization of visual quality and temporal consistency. We introduce Self Gradient Forcing Plus (SGF+), which assigns separate parameters to context writing and denoising while preserving their interaction through causal attention. Both roles are jointly optimized using the original generation objective without auxiliary losses, with context writing supervised through its contribution to future predictions. This simple change improves visual quality and long-horizon consistency over the evaluated baselines in both framewise and chunkwise generation, without additional video training data or a longer training horizon. Trained on only 5s rollouts, SGF+ supports continuous generation for up to 24 hours without long-video fine-tuning. These results highlight role-specific parameterization as an effective design principle for high-quality autoregressive video generation and native long-horizon extrapolation.
\end{abstract}

\newcommand{\gradientfigure}{%
\begin{figure*}[t]
  \setlength{\abovecaptionskip}{4pt}
  \centering
  \begin{minipage}[t]{0.325\textwidth}
    \vspace{0pt}
    \centering
    \begin{minipage}[c][40mm][c]{\linewidth}
      \centering
      \includegraphics[width=\linewidth]{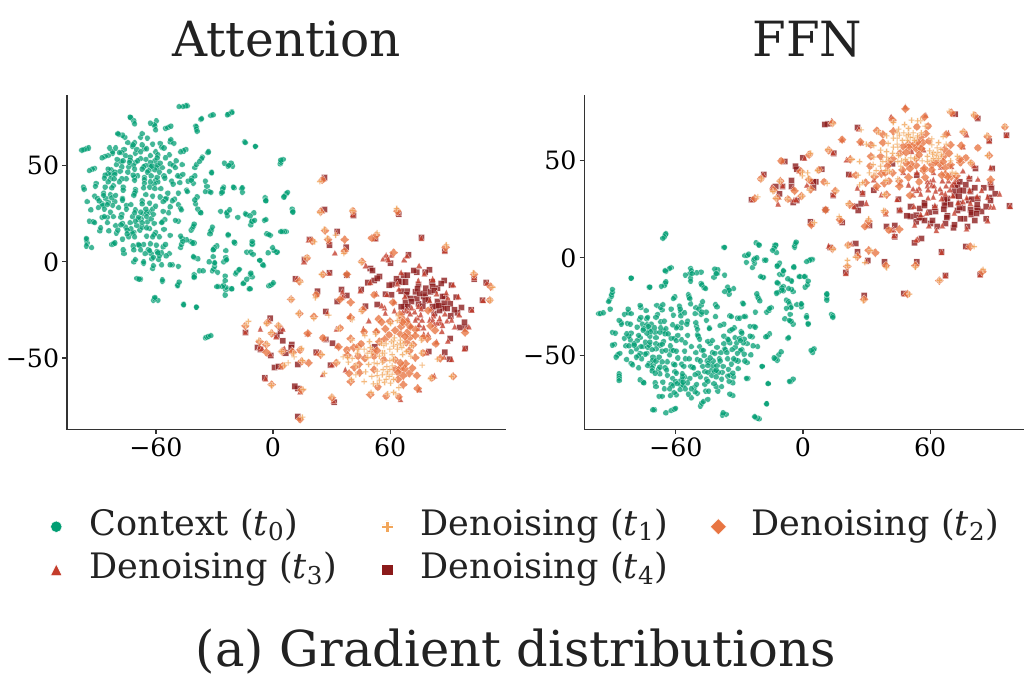}
    \end{minipage}
  \end{minipage}\hfill
  \begin{minipage}[t]{0.325\textwidth}
    \vspace{0pt}
    \centering
    \begin{minipage}[c][40mm][c]{\linewidth}
      \centering
      \includegraphics[width=\linewidth]{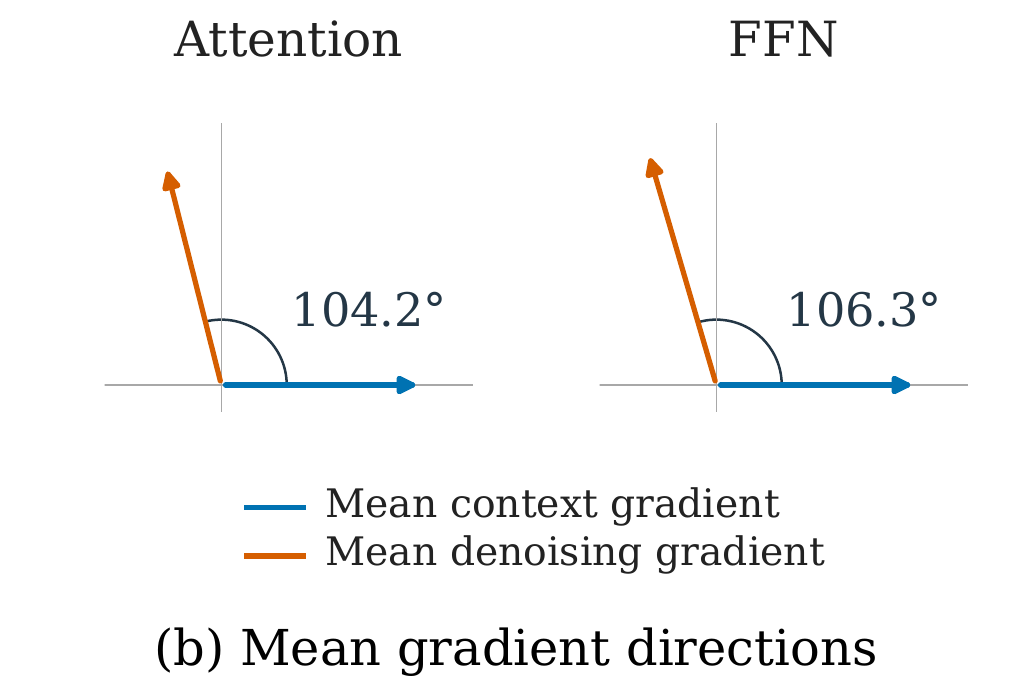}
    \end{minipage}
  \end{minipage}\hfill
  \begin{minipage}[t]{0.325\textwidth}
    \vspace{0pt}
    \centering
    \begin{minipage}[c][40mm][c]{\linewidth}
      \centering
      \includegraphics[width=\linewidth]{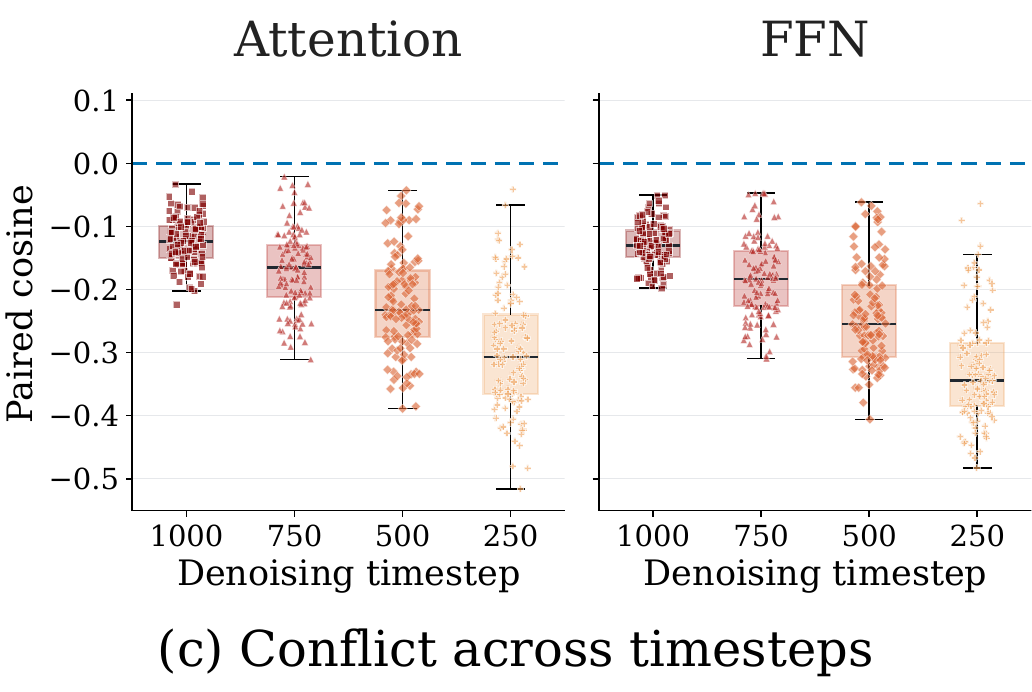}
    \end{minipage}
  \end{minipage}
  \caption{\textbf{Visualization of gradient conflict.} We analyze SGF gradients over 128 prompts and 4 denoising timesteps, separately for Attention and FFN. (a) Angular-distance t-SNE visualizes the distributions of context-writing and denoising gradients. $t_0=0$ denotes context writing at timestep 0, while $t_1,t_2,t_3,t_4$ denote denoising at timesteps 250, 500, 750, and 1000, respectively. (b) Gradients are aggregated to compute the mean directions. (c) At each denoising timestep, we compute the paired cosine similarity between the denoising and context-writing gradients for the same prompt. Layer-wise gradient distributions are provided in Appendix~\ref{sec:appendix-layers}.}
  \label{fig:gradient-conflict}
\end{figure*}
}

\section{Introduction}
\label{sec:introduction}

Autoregressive video diffusion models generate videos sequentially by denoising new frames conditioned on historical context~\cite{yin2025slowbidirectionalfastautoregressive,huang2025selfforcingbridgingtraintest,liu2025rollingforcingautoregressivelong,su2026and}. The generated frames are then encoded into key-value (KV) representations that provide context for subsequent predictions. This process supports low-latency video generation and interactive world modeling, where inference can extend far beyond the temporal horizon used in training~\cite{yang2025longliverealtimeinteractivelong, huang2026solarwmopendatascalable, savva2026solaris, zhang2026echowmopenenterableomnimodal, duan2026long}. It involves two distinct computational roles: \emph{denoising} generates the current content, while \emph{context writing} determines how that content is represented for future generation. Learning both roles is essential for maintaining visual quality and temporal consistency as generation continues.

As summarized in Figure~\ref{fig:method-lineage}, Self Forcing (SF)~\cite{huang2025selfforcingbridgingtraintest} mitigates exposure bias by training on self-generated histories with a video-level distribution-matching objective. However, to maintain computational tractability, SF treats the historical KV cache as frozen rollout state for subsequent predictions. Future generation losses can therefore supervise how noisy target tokens read history, but not how earlier generated latents are encoded into useful K/V representations for future generation. This missing supervision path constitutes the \emph{historical context-gradient gap}~\cite{zhuang2026selfgradientforcingnative}.

Self Gradient Forcing (SGF)~\cite{zhuang2026selfgradientforcingnative} closes this gap through a two-pass training procedure. The first pass performs a no-gradient autoregressive rollout, while the second reconstructs context representations and target predictions in parallel. The sampled latents remain detached, but the reconstructed KV path is differentiable, allowing future generation losses to supervise context writing without backpropagating through the full rollout. Although this improves long-video extrapolation, context writing and denoising still share the same parameters. We observe residual degradation in long sequences and, in a small number of generated videos, even lower visual quality in the initial frames than with SF. Since the first generated frame has no preceding video context, it provides a useful probe of denoising quality. These observations motivate a fundamental question: \emph{Can shared parameters effectively accommodate the optimization demands of both context writing and denoising?}

\gradientfigure

We investigate this question by analyzing the gradient contributions of context writing and denoising under the same generation objective during SGF training. As shown in Figure~\ref{fig:gradient-conflict}, (a) angular-distance t-SNE reveals distinct role-dependent gradient groups in both Attention and FFN. (b) After aggregation within each module family, the mean gradient directions exhibit large angular separation: $104.2^\circ$ in Attention and $106.3^\circ$ in FFN, both exceeding $90^\circ$. (c) Pairwise analysis across prompts and denoising timesteps further reveals systematic negative alignment, with negative cosine similarities for all 512 measured pairs in both module families. These results reveal substantially different, even conflicting, optimization demands on shared parameters, motivating role-specific parameterization. Appendix~\ref{sec:role-separation-theory} provides a local theoretical analysis of its optimization advantage under gradient conflict.

Based on this analysis, we introduce \textbf{Self Gradient Forcing Plus (SGF+)}, which revisits a basic design choice in autoregressive video forcing: sharing parameters between context writing and denoising. SGF+ adopts \emph{role-specific parameterization}, assigning independent parameters to the two roles while preserving their interaction through the causal attention structure. The \emph{context writer} encodes generated history into layer-wise KV representations, which the \emph{denoiser} reads to generate current frames. Both roles are jointly optimized using the original generation objective, with context writing supervised through its contribution to future predictions. This simple architectural change retains SGF's self-generated rollouts, differentiable replay, and short training horizon, requiring no auxiliary losses, additional video training data, or long-horizon fine-tuning. By intervening in parameter sharing, SGF+ addresses a different design dimension from history construction and context management, making these approaches complementary in mechanism.

Experiments on VBench over generation horizons of 5s, 60s, and 240s show that SGF+ achieves better visual quality and long-horizon consistency than SF and SGF in both framewise and chunkwise generation. The qualitative comparisons in Figure~\ref{fig:teaser} illustrate these gains over long rollouts. With only 5s training rollouts, SGF+ supports continuous generation for up to 24 hours, without extending the training horizon or applying long-video fine-tuning. We refer to this ability as \emph{native long-horizon extrapolation}: sustained generation learned within a limited temporal training window. Together, these results highlight \emph{role-specific parameterization} as an effective design principle for high-quality autoregressive video generation and native long-horizon extrapolation. Our main contributions are as follows:
\begin{itemize}
  \item We identify distinct, even conflicting, context-writing and denoising gradients under the same objective, revealing an optimization limitation of shared parameters.
  \item We introduce Self Gradient Forcing Plus (SGF+), a role-specific parameterization for autoregressive video diffusion that jointly trains context writing and denoising with the original generation objective, without auxiliary losses, additional video training data, or long-horizon fine-tuning.
  \item Experiments show improved visual quality and long-horizon consistency in framewise and chunkwise generation, with native long-horizon extrapolation to 24-hour continuous videos from 5s training rollouts.
\end{itemize}

\begin{figure*}[t]
  \setlength{\abovecaptionskip}{6pt}
  \centering
  \includegraphics[width=\textwidth]{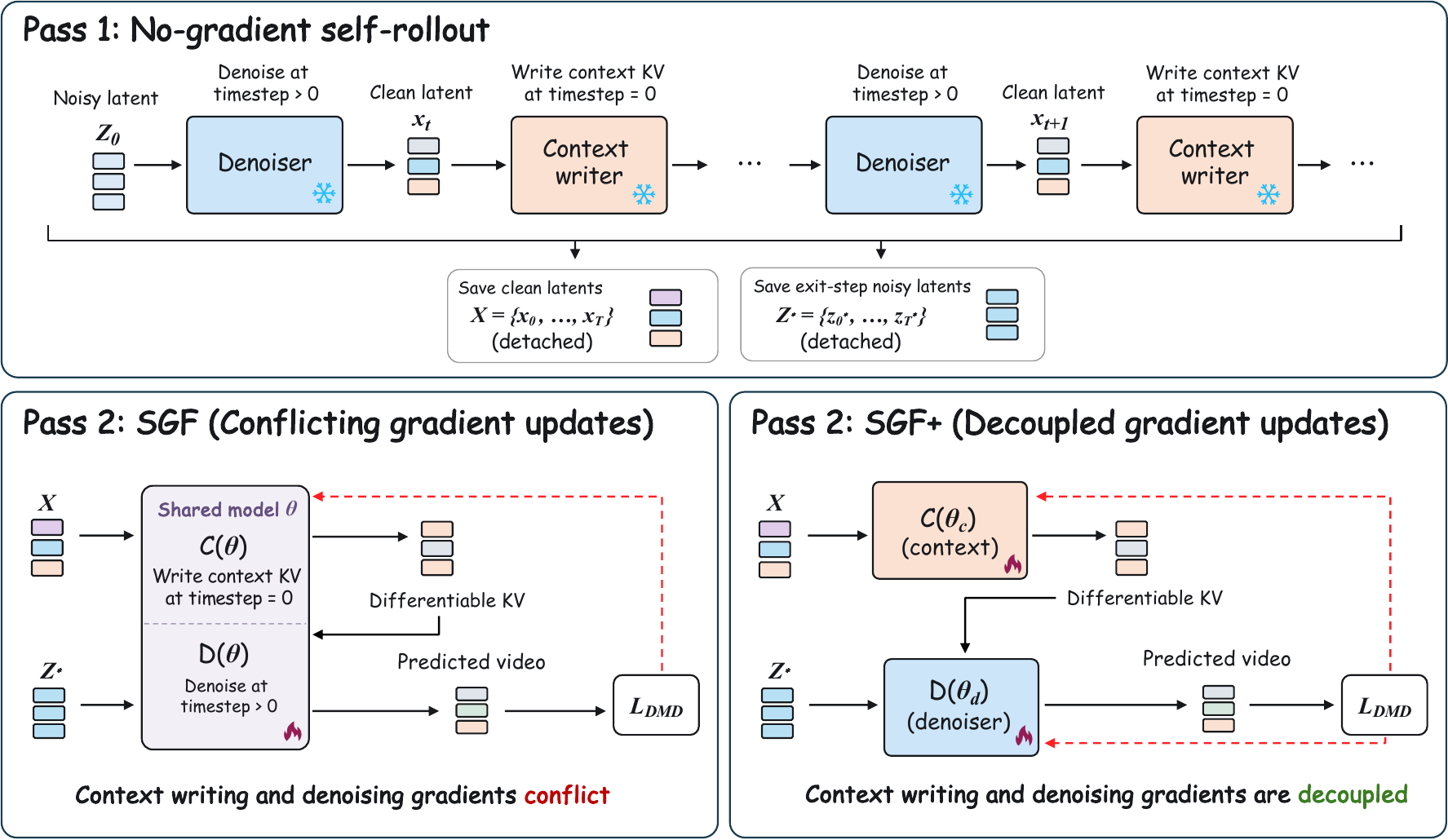}
  \caption{\textbf{Training pipeline of SGF and SGF+.} Pass 1 performs a no-gradient autoregressive rollout and records detached clean context latents $X$ together with noisy target latents $Z^{\star}$ at a sampled denoising exit. Pass 2 reconstructs the forward computation from the recorded latents, allowing future-generation losses to supervise context writing through differentiable context KV states. SGF accumulates context-writing and denoising gradients on shared parameters $\theta$, whereas SGF+ assigns separate parameters $\theta_c$ and $\theta_d$ to the two roles.}
  \label{fig:method-overview}
\end{figure*}

\section{Related Work}
\label{sec:related-work}

\subsection{Autoregressive Video Diffusion and World Models}

Autoregressive video diffusion combines iterative denoising with causal temporal prediction~\cite{sand2025magi,chen2025skyreels}. Causal video distillation and Self Forcing support efficient streaming inference through historical KV reuse~\cite{yin2025slowbidirectionalfastautoregressive,huang2025selfforcingbridgingtraintest}, while Rolling Forcing coordinates denoising across a moving temporal window~\cite{liu2025rollingforcingautoregressivelong}. Historical context also connects past outputs to future predictions in interactive world models. EchoWM, SolarWM, Zing-0.5, XPACE, and EditWorld adopt or adapt SGF's differentiable context reconstruction to supervise context writing through future generation losses, with applications spanning visual world simulation, interactive editing, and audio-video generation~\cite{huang2026solarwmopendatascalable,chen2026zing05playableworldsrealtime,wei2026xpacejointworldaction,zhang2026echowmopenenterableomnimodal,liao2026precise}. SGF+ complements these developments through role-specific parameterization, jointly optimizing context writing and denoising with separate parameters under the original generation objective, without auxiliary losses.

\subsection{Forcing Strategies for Autoregressive Video Generation}

Forcing methods differ in history construction, supervision, and gradient propagation. Teacher forcing uses clean ground-truth histories, whereas Diffusion Forcing independently samples noise levels across temporal tokens~\cite{chen2024diffusionforcingnexttokenprediction}. SF uses self-generated rollouts to reduce the mismatch between training and inference~\cite{huang2025selfforcingbridgingtraintest}, while Resampling Forcing combines self-resampled histories with a frame-level diffusion objective~\cite{guo2026endtoendtrainingautoregressivevideo}. Rolling Forcing coordinates denoising across adjacent frames at progressively different noise levels~\cite{liu2025rollingforcingautoregressivelong}, and Mask Forcing broadens rollout exploration through dual-noise masking under the existing distillation objective~\cite{zhao2026maskforcingimprovingautoregressive}.

Many subsequent methods build on SF's self-generated rollout framework to improve initialization or supervision. Causal Forcing uses an autoregressive teacher for ODE initialization before SF-style distribution matching~\cite{zhu2026causalforcingautoregressivediffusion}, while Causal Forcing++ replaces it with causal consistency distillation~\cite{zhao2026causalforcingpp}. Data-Forcing Distillation introduces data guidance into score-based updates to improve fidelity and diversity~\cite{chen2026dataforcingdistillationrestoringdiversity}. One-Forcing adds adversarial supervision from real videos~\cite{feng2026oneforcing}, Reward Forcing reweights DMD updates using motion rewards~\cite{lu2025rewardforcing}, and DuoMatching augments video-level distribution matching with frame-level marginal supervision from an image teacher~\cite{zhan2026duomatching}. Video-Mirai uses future-aware representation targets~\cite{yu2026videomiraiautoregressivevideodiffusion}, while Next Forcing introduces auxiliary multi-chunk prediction modules~\cite{xu2026forcingcausalworldmodeling}.

SGF extends SF's training framework by restoring gradients through historical KV construction, allowing future generation losses to supervise context writing~\cite{zhuang2026selfgradientforcingnative}. SGF+ revisits the underlying parameter-sharing design, jointly optimizing context writing and denoising with separate parameters under the original generation objective. This formulation retains short rollouts without auxiliary losses or additional video training data, and complements advances in history construction and supervision design.

\subsection{Long-Horizon Video Generation and Context Management}

Long-video methods improve temporal coverage, context access, and inference-time state management~\cite{henschel2025streamingt2v,song2025historyguided}. LongLive uses a short-video teacher to supervise successive segments of longer self-generated videos, combining this streaming long tuning with a bounded attention window, a frame sink, and KV recaching for prompt transitions~\cite{yang2025longliverealtimeinteractivelong}. Self-Forcing++ similarly supervises sampled segments of extended student rollouts, allowing the student's training horizon to exceed the teacher's supervision window~\cite{cui2025selfforcingminutescalehighqualityvideo}. Within a limited context budget, rolling caches support streaming generation~\cite{huang2025selfforcingbridgingtraintest}, attention sinks retain persistent anchors~\cite{liu2025rollingforcingautoregressivelong,yang2025longliverealtimeinteractivelong}, and history routing retrieves relevant earlier frames~\cite{guo2026endtoendtrainingautoregressivevideo}. Training-free methods further improve extrapolation: FreqForcing mitigates spectral drift through self-anchoring~\cite{li2026freqforcingautoregressivelongvideo}, while MemRoPE combines evolving memory tokens with online RoPE indexing~\cite{kim2026memropetrainingfreeinfinitevideo}.

SGF+ addresses how context writing and denoising are jointly learned within a limited temporal training window. Through role-specific parameterization, it achieves \emph{native long-horizon extrapolation} under the original generation objective, without extending the training horizon or applying long-video fine-tuning. This intervention is complementary in mechanism to context management, spectral correction, positional adaptation, and longer-horizon training.
\section{Method}
\label{sec:method}

Figure~\ref{fig:method-overview} presents the training pipelines of SGF and SGF+. Section~\ref{sec:method-sgf} reviews SGF's two-pass training procedure, which makes context writing differentiable. Section~\ref{sec:gradient-conflict} analyzes the gradient conflict caused by sharing parameters between context writing and denoising. Section~\ref{sec:sgf-plus} then introduces SGF+, which separates the two roles with role-specific parameters to achieve higher visual quality and stronger long-horizon consistency.

\subsection{Self Gradient Forcing Preliminaries}
\label{sec:method-sgf}

An autoregressive video diffusion model generates latent blocks sequentially. For block $i$, a denoiser reads the historical key-value (KV) cache and maps a noisy latent $z_i^t$ to a clean estimate $\tilde{x}_i$. A context-writing call then encodes this estimate into the memory used by later blocks:
\begin{equation}
  \mathrm{KV}_i
  = \mathcal{C}_{\theta}\!\left(\tilde{x}_i,t_{\mathrm{ctx}}\mid
  \mathrm{KV}_{<i}\right),
  \quad t_{\mathrm{ctx}}=0.
  \label{eq:cache-writing}
\end{equation}
The model therefore uses the same DiT parameters $\theta$ in two roles: denoising the current block and writing its clean representation for future predictions.

SGF trains this recurrent process with two passes. \textbf{Pass 1: no-gradient self-rollout.}\enspace The model performs the ordinary serial autoregressive rollout without retaining an autograd graph. It samples a denoising exit $t^{\star}$ and records the noisy exit latent $z_i^{t^{\star}}$ and its clean estimate $\tilde{x}_i$ at each autoregressive step. The clean estimates are processed at $t_{\mathrm{ctx}}=0$ to update the serial KV cache used by subsequent blocks. This produces the collections $Z^{\star}=\{z_i^{t^{\star}}\}_i$ and $X=\{\tilde{x}_i\}_i$, both detached from the sampling trajectory.

\textbf{Pass 2: parallel context-gradient reconstruction.}\enspace SGF reconstructs the sampled-exit computation in a single parallel forward pass under a causal reconstruction mask $\mathcal{M}_{\mathrm{rec}}$. The latents in $X$ remain stop-gradient inputs, but the context hidden states and their KV projections are recomputed with gradient tracking:
\begin{equation}
\begin{aligned}
  M_{\theta} &= \mathcal{C}_{\theta}
    \!\left(\operatorname{sg}(X),t_{\mathrm{ctx}}\mid\mathcal{M}_{\mathrm{rec}}\right), \\
  \hat{X}_{\mathrm{tar}} &= \mathcal{D}_{\theta}
    \!\left(Z^{\star},t^{\star}\mid M_{\theta},\mathcal{M}_{\mathrm{rec}}\right), \\
  \mathcal{L} &= \mathcal{L}_{\mathrm{DMD}}(\hat{X}_{\mathrm{tar}}).
\end{aligned}
  \label{eq:sgf-reconstruction}
\end{equation}
Consequently, the future-generation loss trains both how noisy target tokens read the history and how clean context tokens write it. The sampled latents and cache trajectory stay detached. Only the Pass-2 reconstruction is differentiable. This boundary restores the context-writing path without backpropagating through the full autoregressive rollout.

\subsection{Observations of Gradient Conflict}
\label{sec:gradient-conflict}

Restoring context-writing gradients improves temporal extrapolation but can slightly reduce visual quality in some cases (Appendix~\ref{sec:appendix-layers}). To understand this behavior, we isolate the gradient contributions of context writing and denoising within the same Pass-2 computation. For a shared linear weight $W$, let $H$ denote its input activations and $\Delta=\partial\mathcal{L}/\partial Y$ the corresponding output gradients. Partitioning the tokens by role gives
\begin{equation}
  g_{\mathrm{C}}=\Delta_{\mathrm{C}}^{\mathsf T}H_{\mathrm{C}},
  \quad
  g_{\mathrm{D}}=\Delta_{\mathrm{D}}^{\mathsf T}H_{\mathrm{D}},
  \quad
  g_W=g_{\mathrm{C}}+g_{\mathrm{D}},
  \label{eq:role-gradient-decomposition}
\end{equation}
where $g_{\mathrm{C}}$ and $g_{\mathrm{D}}$ are the contributions from context-writing and denoising tokens, respectively. Both contributions arise from the same objective rather than separate losses. We measure their directional alignment using cosine similarity:
\begin{equation}
  s(g_{\mathrm{C}},g_{\mathrm{D}})
  =\frac{\langle g_{\mathrm{C}},g_{\mathrm{D}}\rangle}
  {\lVert g_{\mathrm{C}}\rVert_2\lVert g_{\mathrm{D}}\rVert_2}.
  \label{eq:gradient-cosine}
\end{equation}
Large angles indicate directional disagreement between the two gradients. A pair is conflicting when $s<0$, equivalently when the angle between the gradients exceeds $90^{\circ}$.

Figure~\ref{fig:gradient-conflict} visualizes the gradient relationships in SGF. \textit{(a) Gradient distributions.} Angular-distance t-SNE shows clearly separated distributions of context-writing and denoising gradients in both Attention and FFN. \textit{(b) Mean gradient directions.} The mean gradient directions of the two roles exhibit large angular separation, even exceeding $90^{\circ}$ in both module families, indicating overall negative alignment. \textit{(c) Gradient conflict across timesteps.} All 512 paired cosine similarities are negative in both module families, indicating persistent conflict across the measured prompts and denoising timesteps. Appendix~\ref{sec:appendix-layers} provides the complete layer-wise gradient distributions and additional evidence of gradient conflict in the larger, 22B-parameter LTX-2.5 model~\cite{lightricks2026ltx25}.

This directional relationship directly affects the gradient received by the shared parameters. Since $g_W=g_{\mathrm{C}}+g_{\mathrm{D}}$, conflicting gradient contributions partially cancel on shared parameters. This motivates assigning independent parameters to context writing and denoising, separating their updates while preserving their forward interaction. Appendix~\ref{sec:role-separation-theory} explains the advantage of role separation under gradient conflict from a local optimization perspective.

\subsection{Self Gradient Forcing Plus}
\label{sec:sgf-plus}

Self Gradient Forcing Plus assigns independent parameters to context writing and denoising, forming a context writer $\mathcal{C}_{\theta_c}$ and a denoiser $\mathcal{D}_{\theta_d}$. Both are initialized from the same autoregressive diffusion model. Pass 1 retains the no-gradient rollout in Figure~\ref{fig:method-overview}: $\mathcal{D}_{\theta_d}$ generates each block, and $\mathcal{C}_{\theta_c}$ writes its clean estimate into the cache. Pass 2 becomes
\begin{equation}
\begin{aligned}
  M_{\theta_c} &= \mathcal{C}_{\theta_c}
    \!\left(\operatorname{sg}(X),t_{\mathrm{ctx}}\mid\mathcal{M}_{\mathrm{rec}}\right), \\
  \hat{X}_{\mathrm{tar}} &= \mathcal{D}_{\theta_d}
    \!\left(Z^{\star},t^{\star}\mid M_{\theta_c},\mathcal{M}_{\mathrm{rec}}\right), \\
  \mathcal{L} &= \mathcal{L}_{\mathrm{DMD}}(\hat{X}_{\mathrm{tar}}).
\end{aligned}
  \label{eq:sgf-plus}
\end{equation}
The future-generation objective reaches $\theta_c$ through the differentiable KV state $M_{\theta_c}$ and $\theta_d$ through target denoising. The context writer and denoiser remain forward-coupled and jointly trained, while their updates no longer accumulate on the same weights. SGF+ reroutes SGF's existing gradient paths without an auxiliary context loss or gradient projection.

We adopt parameter separation as the simplest and most direct way to prevent conflicting role updates. Beyond parameter separation, other approaches such as model merging~\cite{wei2025modelingmultitask,ilharco2023taskarithmetic,yadav2023ties,yu2024dare} offer insights into coordinating the optimization of different roles. We leave this for future work.

\endgroup

\setcounter{dbltopnumber}{2}
\begin{figure*}[t]
  \centering
  \includegraphics[width=\textwidth]{figures/qualitative_chunkwise/chunkwise_240s_01.pdf}
  \vspace{2pt}
  \includegraphics[width=\textwidth]{figures/qualitative_chunkwise/chunkwise_240s_02.pdf}
  \vspace{2pt}
  \includegraphics[width=\textwidth]{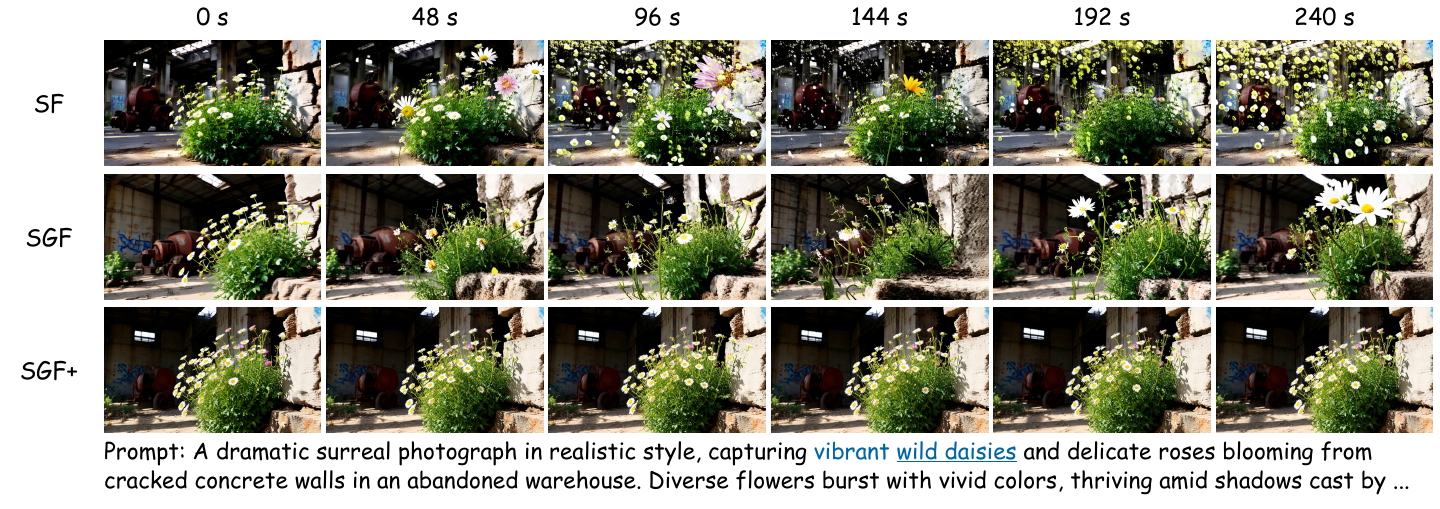}
  \caption{\textbf{Qualitative comparisons.} The top two cases show chunkwise generation, and the bottom case shows framewise generation. Additional qualitative comparisons covering a wider variety of scenes are provided in Appendix~\ref{sec:appendix-qualitative}.}
  \label{fig:qualitative-chunkwise-main}
\end{figure*}

\begin{table*}[t]
\centering
\caption{\textbf{Framewise and chunkwise long-horizon metrics at 60s and 240s.} The 60s setting uses VBench-Long prompts, and the 240s setting uses MovieGen-128 prompts. All methods share the same initialization, prompt set, and random seed within each setting.}
\label{tab:vbench-chunkwise}
\begingroup
\small
\renewcommand{\arraystretch}{1.13}
\begin{tabular}{l*{7}{c}}
\toprule
\textbf{Method} & \mbox{\fontsize{8}{10}\selectfont\bfseries\boldmath Subject} & \mbox{\fontsize{8}{10}\selectfont\bfseries\boldmath Background} & \mbox{\fontsize{8}{10}\selectfont\bfseries\boldmath Flickering} & \mbox{\fontsize{8}{10}\selectfont\bfseries\boldmath Motion} & \mbox{\fontsize{8}{10}\selectfont\bfseries\boldmath Dynamics} & \mbox{\fontsize{8}{10}\selectfont\bfseries\boldmath Aesthetics} & \mbox{\fontsize{8}{10}\selectfont\bfseries\boldmath Imaging} \\
\midrule
\multicolumn{8}{c}{\textbf{\textit{60s Generation Horizon \textperiodcentered{} Chunkwise}}} \\
\midrule
Self Forcing & 94.97 & 94.85 & 98.88 & 96.69 & \textbf{93.46} & 58.09 & 68.12 \\
Self Gradient Forcing & 98.17 & 97.12 & 98.94 & 98.53 & 64.16 & 65.38 & 71.06 \\
\rowcolor{sgfgray}
Self Gradient Forcing Plus & \textbf{98.48} & \textbf{97.41} & \textbf{99.35} & \textbf{98.69} & 63.93 & \textbf{66.63} & \textbf{71.51} \\
\midrule
\multicolumn{8}{c}{\textbf{\textit{60s Generation Horizon \textperiodcentered{} Framewise}}} \\
\midrule
Self Forcing & 97.62 & 97.42 & 99.08 & 98.55 & 63.89 & 64.80 & 70.50 \\
Self Gradient Forcing & 98.26 & 96.99 & 99.08 & 98.17 & \textbf{71.19} & 64.95 & \textbf{71.52} \\
\rowcolor{sgfgray}
Self Gradient Forcing Plus & \textbf{99.08} & \textbf{98.06} & \textbf{99.43} & \textbf{98.90} & 55.38 & \textbf{66.29} & 71.11 \\
\midrule
\multicolumn{8}{c}{\textbf{\textit{240s Generation Horizon \textperiodcentered{} Chunkwise}}} \\
\midrule
Self Forcing & 94.94 & 95.21 & 93.54 & 96.75 & \textbf{93.90} & 55.12 & 68.38 \\
Self Gradient Forcing & 97.72 & 96.90 & 97.03 & 98.50 & 56.97 & 62.54 & 71.11 \\
\rowcolor{sgfgray}
Self Gradient Forcing Plus & \textbf{98.21} & \textbf{97.31} & \textbf{97.57} & \textbf{98.63} & 56.92 & \textbf{64.74} & \textbf{71.39} \\
\midrule
\multicolumn{8}{c}{\textbf{\textit{240s Generation Horizon \textperiodcentered{} Framewise}}} \\
\midrule
Self Forcing & 96.36 & 96.42 & 96.73 & 98.01 & \textbf{67.04} & 61.36 & 70.71 \\
Self Gradient Forcing & 97.07 & 96.81 & 96.85 & 98.24 & 65.68 & 61.89 & 70.89 \\
\rowcolor{sgfgray}
Self Gradient Forcing Plus & \textbf{98.74} & \textbf{97.81} & \textbf{98.24} & \textbf{98.94} & 53.96 & \textbf{63.95} & \textbf{70.90} \\
\bottomrule
\end{tabular}
\endgroup
\end{table*}

\begin{table*}[t]
\centering
\caption{\textbf{Ablation of modules for parameter separation.} Attention only and FFN only apply parameter separation only to Attention and FFN modules, respectively. The full model applies role-specific separation to all model parameters.}
\label{tab:ablation}
\begingroup
\small
\renewcommand{\arraystretch}{1.16}
\begin{tabular}{l*{7}{c}}
\toprule
\textbf{Method} & \mbox{\fontsize{8}{10}\selectfont\bfseries\boldmath Subject} & \mbox{\fontsize{8}{10}\selectfont\bfseries\boldmath Background} & \mbox{\fontsize{8}{10}\selectfont\bfseries\boldmath Flickering} & \mbox{\fontsize{8}{10}\selectfont\bfseries\boldmath Motion} & \mbox{\fontsize{8}{10}\selectfont\bfseries\boldmath Dynamics} & \mbox{\fontsize{8}{10}\selectfont\bfseries\boldmath Aesthetics} & \mbox{\fontsize{8}{10}\selectfont\bfseries\boldmath Imaging} \\
\midrule
Attention only & 95.37 & 95.43 & 99.30 & 97.40 & \textbf{93.41} & 58.49 & 68.18 \\
FFN only & 97.43 & 96.48 & 98.33 & 98.27 & 65.37 & 63.49 & 70.41 \\
\rowcolor{sgfgray}
Full model (Ours) & \textbf{98.48} & \textbf{97.41} & \textbf{99.35} & \textbf{98.69} & 63.93 & \textbf{66.63} & \textbf{71.51} \\
\bottomrule
\end{tabular}
\endgroup
\end{table*}

\begin{table*}[t]
\centering
\caption{\textbf{Training and inference efficiency.} Compared with SGF, SGF+ modestly increases training time per step, with a small increase in inference memory and nearly unchanged inference latency.}
\label{tab:efficiency}
\begingroup
\small
\renewcommand{\arraystretch}{1.16}
\begin{tabular}{l*{5}{c}}
\toprule
\textbf{Method} & \mbox{\fontsize{8}{10}\selectfont\bfseries\boldmath Train Peak Memory} & \mbox{\fontsize{8}{10}\selectfont\bfseries\boldmath Train Stable Memory} & \mbox{\fontsize{8}{10}\selectfont\bfseries\boldmath Train Time / step} & \mbox{\fontsize{8}{10}\selectfont\bfseries\boldmath Infer Memory} & \mbox{\fontsize{8}{10}\selectfont\bfseries\boldmath Infer Time / 81 frames} \\
\midrule
Self Forcing & 86.36~GB & 86.19~GB & 10.02~s & 24.85~GB & 4.963~s \\
Self Gradient Forcing & 97.83~GB & 70.06~GB & 11.79~s & 24.85~GB & 4.962~s \\
\rowcolor{sgfgray}
Self Gradient Forcing Plus & 98.15~GB & 73.60~GB & 12.76~s & 27.96~GB & 4.969~s \\
\bottomrule
\end{tabular}
\endgroup
\end{table*}

\section{Experiments}
\label{sec:experiments}

\subsection{Experimental Setup}
\label{sec:experimental-setup}

We use Teacher Forcing (TF) initialization with weights released by Causal Forcing~\cite{zhu2026causalforcingautoregressivediffusion} and train on 5s video windows using filtered and expanded prompts from VidProM~\cite{wang2024vidprom}. Following SGF~\cite{zhuang2026selfgradientforcingnative}, we compare SF, SGF, and SGF+ in framewise and chunkwise generation at 5s, 60s, and 240s. The 60s and 240s evaluations therefore test native extrapolation beyond the training horizon. All models use a causal video diffusion student based on Wan2.1-T2V-1.3B~\cite{wan2025}, a frozen Wan2.1-T2V-14B teacher, and a separately trained fake-score network initialized from Wan2.1-T2V-1.3B. SGF+ can implement role-specific parameter separation using either two separate LoRA adapters or two full parameter sets. In this work, we use the latter.

\begingroup
\setlength{\parskip}{3pt}
The 5s evaluation follows standard VBench~\cite{huang2023vbench}, with full results in Appendix~\ref{sec:appendix-short}. The main comparison evaluates 60s and 240s generation using VBench-Long prompts~\cite{huang2025vbenchpp} and 128 MovieGen prompts~\cite{polyak2024movie}, respectively. For both horizons, we report Subject Consistency, Background Consistency, Temporal Flickering, Motion Smoothness, Dynamic Degree, Aesthetic Quality, and Imaging Quality. Scores are multiplied by 100, and higher is better. Further implementation details are provided in Appendix~\ref{sec:appendix-implementation}.\par
\endgroup

\subsection{Native Long-Horizon Extrapolation}
\label{sec:long-horizon-results}

\noindent\textbf{Qualitative comparisons.}\enspace
Figure~\ref{fig:qualitative-chunkwise-main} compares SF, SGF, and SGF+ over 240s rollouts in chunkwise and framewise generation. In the gold-rush town scene, SGF+ preserves a more coherent rendering of buildings and miners, while SF exhibits background distortions and SGF develops conspicuous green streak artifacts. In the salt-desert scene, SGF+ better maintains the explorer's appearance and desert setting, with fewer helmet distortions and sky artifacts than the baselines. In the warehouse-flower scene, SF produces floating flower-like artifacts and SGF exhibits pronounced changes in flower shape and scale, while SGF+ better preserves the flower cluster and surrounding scene. These examples illustrate improvements in both visual quality and long-horizon consistency. Additional qualitative comparisons in both generation settings are provided in Appendix~\ref{sec:appendix-qualitative}. Trained only on 5s video windows, SGF+ supports continuous generation for up to 24 hours, as shown in Figure~\ref{fig:teaser} and Appendix~\ref{sec:appendix-24h}.

\noindent\textbf{Quantitative comparisons.}\enspace
Table~\ref{tab:vbench-chunkwise} compares the three methods at generation horizons of $12\times$ and $48\times$ the training window. Across both generation granularities and both long horizons, SGF+ outperforms SF and SGF on most quality and consistency metrics, with gains in subject consistency, background consistency, temporal flickering, motion smoothness, aesthetic quality, and imaging quality. These results indicate that separating the parameters of context writing and denoising to avoid conflicting gradients on shared weights helps preserve both visual quality and long-horizon consistency during extended generation.
Dynamic Degree is the main exception, with SF or SGF achieving higher scores. This does not necessarily indicate better motion quality: scene jumps, object deformation, subject disappearance, or additional people appearing (as shown in Figures~\ref{fig:teaser} and~\ref{fig:qualitative-chunkwise-main}) can produce large but incoherent apparent motion, inflating Dynamic Degree. In contrast, SGF+ maintains more stable visuals and more coherent motion, so its lower Dynamic Degree does not imply worse generation quality.

\subsection{Ablation Study}
\label{sec:role-ablation}

The comparison between SGF and SGF+ in Section~\ref{sec:long-horizon-results} demonstrates the benefits of parameter separation. We further ablate parameter separation in Attention and FFN. Table~\ref{tab:ablation} considers these two main Transformer module families. Both partial-separation variants underperform the full model on quality and consistency metrics. Thus, an MoE-style design~\cite{fedus2022switch,lepikhin2021gshard,du2022glam,jiang2024mixtral} that introduces role experts only in FFN cannot fully eliminate the conflict, as conflicting gradients also arise in Attention. Attention-only separation achieves the highest Dynamic Degree but scores substantially lower than FFN-only separation and the full model on most other metrics, again showing that a higher Dynamic Degree does not necessarily indicate better generation quality.

\subsection{Discussion}
\label{sec:experiment-discussion}

\par\smallskip
\noindent\textbf{Efficiency.}\enspace
Table~\ref{tab:efficiency} shows that SGF+ doubles generator parameters from 1.4B to 2.8B while preserving SGF's denoising and context-writing call schedule. Each token uses only its role-specific parameters within a forward pass, so computation does not double. For the evaluated 81-frame workload, inference time remains nearly unchanged at 4.969 seconds versus 4.962 seconds for SGF, while memory increases from 24.85 to 27.96 GB. Training peak and stable memory increase from 97.83 to 98.15 GB and from 70.06 to 73.60 GB, respectively; time per step rises from 11.79 to 12.76 seconds, an increase of approximately 8.2\%. Thus, the additional parameter storage incurs only modest memory overhead, alongside an 8.2\% increase in training time per step and nearly unchanged inference latency under the evaluated workload.

\par\smallskip
\noindent\textbf{Role-specific gradient conflict at TF initialization.}\enspace
Role-specific gradient differences and conflict are already observable at teacher-forcing (TF) initialization, before SGF training. Figure~\ref{fig:adjacent-gradient} shows a sharp directional change between diffusion timestep indices 0 and 1, at the transition from clean-context writing to noisy denoising. Appendix~\ref{sec:appendix-tf-gradients} further shows mean role-gradient angles exceeding $90^\circ$ in both Attention and FFN, with the vast majority of paired cosine similarities being negative. These findings motivate investigating role-specific parameterization from the TF stage and its potential extension to world action models (WAMs)~\cite{ye2026dreamzero,li2026lingbotva,wang2026repwam,peng2026fact} trained with teacher forcing, which use context KV representations for action prediction.

\begin{figure}[t]
\centering
\includegraphics[width=\linewidth]{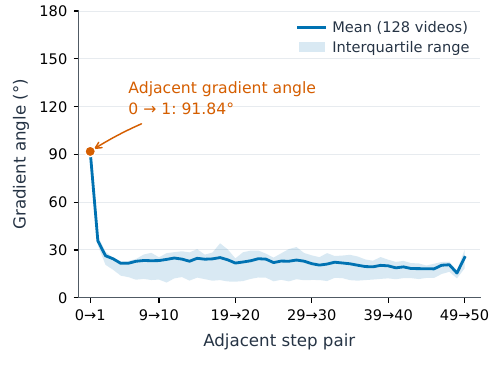}
\captionsetup{skip=8pt}
\caption{\textbf{Adjacent-timestep gradient directions at TF initialization.} We measure gradient directions on 128 video samples over a 50-step diffusion schedule. The curve shows the mean angle between adjacent timesteps. The sharp $0\!\rightarrow\!1$ transition reveals distinct gradient directions for clean-context writing and denoising. Additional visualizations of context-writing and denoising gradients at TF initialization are provided in Appendix~\ref{sec:appendix-tf-gradients}.}
\label{fig:adjacent-gradient}
\end{figure}

\par\smallskip
\noindent\textbf{Relation to autoregressive language models.}\enspace
Standard autoregressive LLMs~\cite{brown2020gpt3,touvron2023llama,touvron2023llama2,grattafiori2024llama3} encode discrete text tokens, write KV representations, and predict subsequent tokens in one forward pass; historical representations also receive gradients from future token losses during training. Autoregressive video diffusion alternates between clean-context encoding and noisy-latent denoising. These distinct input conditions and computational roles may help explain the observed gradient differences. Our results highlight role-specific parameterization as an effective design principle for autoregressive video training: context writing and denoising remain jointly optimized under the original generation objective, without auxiliary losses, additional video training data, or long-video fine-tuning.

\section{Conclusion}
\label{sec:conclusion}

We identified an optimization conflict between context writing and denoising in shared-parameter Self Gradient Forcing. SGF+ resolves this conflict through role-specific parameter separation, improving visual quality and long-horizon consistency over SF and SGF in both framewise and chunkwise generation. It supports generation for up to 24 hours from 5s training rollouts. This role distinction is already evident at TF initialization, motivating role-aware parameterization throughout autoregressive video training.

{\small
\bibliographystyle{ieeenat_fullname}
\bibliography{references}
}
\clearpage
\appendix
\onecolumn
\begingroup
\begin{minipage}{\textwidth}
\centering
{\Large\bfseries\thetitle\par}
\vspace{0.5em}
{\Large Supplementary Material\par}
\vspace{1em}
\raggedright
We provide additional information in the supplementary material, as outlined below:
\begingroup
\hypersetup{linkcolor=red}
\begin{itemize}[leftmargin=1.5em,itemsep=2pt,parsep=0pt,topsep=3pt]
  \item \textbf{Sec.}~\ref{sec:appendix-short}: Additional Short-Video Results.
  \item \textbf{Sec.}~\ref{sec:appendix-implementation}: Additional Implementation Details.
  \item \textbf{Sec.}~\ref{sec:appendix-tf-gradients}: Gradient Role Separation at TF Initialization.
  \item \textbf{Sec.}~\ref{sec:appendix-24h}: 24-Hour Generation Results.
  \item \textbf{Sec.}~\ref{sec:appendix-layers}: Additional Analysis of Gradient Conflict.
  \item \textbf{Sec.}~\ref{sec:appendix-qualitative}: Additional Qualitative Comparisons.
  \item \textbf{Sec.}~\ref{sec:role-separation-theory}: Local Optimization Analysis of Role Separation.
  \item \textbf{Sec.}~\ref{sec:future-work}: Future Work.
\end{itemize}
\endgroup
\section{Additional Short-Video Results}
\label{sec:appendix-short}
\begingroup
\renewenvironment{table*}[1][]{\begingroup\captionsetup{type=table,position=top}}{\endgroup}
\begin{table*}[t]
\centering
\caption{\textbf{Framewise and chunkwise VBench metrics at 5s.} SF, SGF, and SGF+ are evaluated within the training horizon using standard VBench. Bold marks the best value within each generation mode.}
\label{tab:vbench-5s}
\begingroup
\small
\renewcommand{\arraystretch}{1.13}
\begin{tabular}{l*{7}{c}}
\toprule
\textbf{Method} & \mbox{\fontsize{8}{10}\selectfont\bfseries\boldmath Subject} & \mbox{\fontsize{8}{10}\selectfont\bfseries\boldmath Background} & \mbox{\fontsize{8}{10}\selectfont\bfseries\boldmath Flickering} & \mbox{\fontsize{8}{10}\selectfont\bfseries\boldmath Motion} & \mbox{\fontsize{8}{10}\selectfont\bfseries\boldmath Dynamics} & \mbox{\fontsize{8}{10}\selectfont\bfseries\boldmath Aesthetics} & \mbox{\fontsize{8}{10}\selectfont\bfseries\boldmath Imaging} \\
\midrule
\multicolumn{8}{c}{\textbf{\textit{Chunkwise}}} \\
\midrule
Self Forcing & 93.74 & 94.74 & 97.83 & 97.58 & \textbf{89.72} & 66.35 & 69.46 \\
Self Gradient Forcing & 97.01 & 96.42 & 98.80 & 98.50 & 65.28 & 68.04 & 70.56 \\
\rowcolor{sgfgray}
Self Gradient Forcing Plus & \textbf{97.24} & \textbf{96.81} & \textbf{99.30} & \textbf{98.64} & 62.78 & \textbf{68.45} & \textbf{70.64} \\
\midrule
\multicolumn{8}{c}{\textbf{\textit{Framewise}}} \\
\midrule
Self Forcing & 96.06 & 96.15 & 98.81 & 98.54 & 63.89 & 67.11 & 69.83 \\
Self Gradient Forcing & 96.69 & 96.16 & 98.97 & 98.24 & \textbf{64.72} & 66.95 & \textbf{71.41} \\
\rowcolor{sgfgray}
Self Gradient Forcing Plus & \textbf{98.16} & \textbf{97.20} & \textbf{99.47} & \textbf{98.96} & 54.61 & \textbf{67.22} & 70.88 \\
\bottomrule
\end{tabular}

\vspace{5pt}
\begin{tabular}{l*{8}{c}}
\toprule
\textbf{Method} & \mbox{\fontsize{8}{10}\selectfont\bfseries\boldmath Object} & \mbox{\fontsize{8}{10}\selectfont\bfseries\boldmath Multiple} & \mbox{\fontsize{8}{10}\selectfont\bfseries\boldmath Action} & \mbox{\fontsize{8}{10}\selectfont\bfseries\boldmath Color} & \mbox{\fontsize{8}{10}\selectfont\bfseries\boldmath Spatial} & \mbox{\fontsize{8}{10}\selectfont\bfseries\boldmath Scene} & \mbox{\fontsize{8}{10}\selectfont\bfseries\boldmath Appearance} & \mbox{\fontsize{8}{10}\selectfont\bfseries\boldmath Temporal} \\
\midrule
\multicolumn{9}{c}{\textbf{\textit{Chunkwise}}} \\
\midrule
Self Forcing & 95.62 & 85.35 & 96.00 & 86.89 & 81.43 & 53.90 & 20.20 & 24.18 \\
Self Gradient Forcing & 96.20 & 86.60 & 96.00 & 87.41 & 78.96 & 54.06 & 20.34 & 24.16 \\
\rowcolor{sgfgray}
Self Gradient Forcing Plus & \textbf{96.57} & \textbf{88.95} & \textbf{97.40} & \textbf{88.90} & \textbf{82.35} & \textbf{54.68} & \textbf{20.40} & \textbf{24.25} \\
\midrule
\multicolumn{9}{c}{\textbf{\textit{Framewise}}} \\
\midrule
Self Forcing & 94.87 & 85.91 & 95.80 & 84.33 & 75.62 & 54.17 & 20.13 & 23.76 \\
Self Gradient Forcing & 94.46 & 85.50 & \textbf{96.00} & \textbf{87.87} & 73.69 & 54.88 & 20.37 & 23.87 \\
\rowcolor{sgfgray}
Self Gradient Forcing Plus & \textbf{96.04} & \textbf{87.36} & 94.40 & 87.12 & \textbf{78.23} & \textbf{55.67} & \textbf{20.49} & \textbf{23.92} \\
\bottomrule
\end{tabular}
\par
\endgroup
\end{table*}

\endgroup
\par
\end{minipage}
\endgroup
\section{Additional Implementation Details}
\label{sec:appendix-implementation}
Within each generation mode, the compared methods share the same sink-plus-FIFO context policy. Framewise generation uses 4 sink latents, 16 recent latents in the FIFO cache, and 1 current latent, giving a total window of 21. Chunkwise generation uses 3 sink latents, 6 recent latents, and a current chunk of 3 latents, giving a total window of 12.

We optimize both the generator and the critic using AdamW with $\beta_1=0$ and $\beta_2=0.999$. The learning rates are $2\times10^{-6}$ for the generator and $4\times10^{-7}$ for the critic. We perform one generator update for every five critic updates and use four denoising steps for generation. We train on 64 GPUs with a per-GPU batch size of 1 and no gradient accumulation. Chunkwise and framewise SGF+ are trained for 700 and 1,900 steps, respectively.

\section{Gradient Role Separation at TF Initialization}
  \label{sec:appendix-tf-gradients}
  \noindent\begin{minipage}{\textwidth}
  \centering
  \captionsetup{type=figure}
  \includegraphics[width=\textwidth]{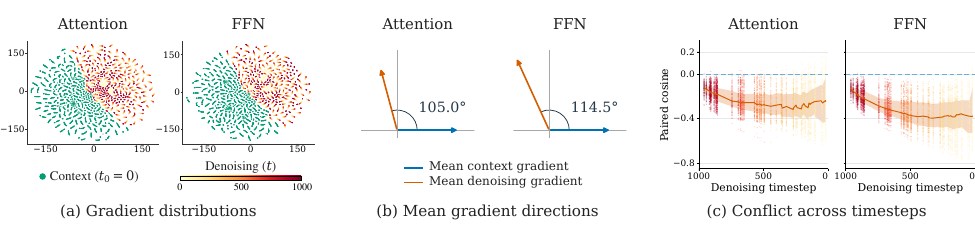}
  \caption{\textbf{Visualization of context--denoising gradients at TF initialization.} We use 128 video samples and 50 randomly sampled denoising timesteps, with context fixed at $t=0$. (a) Angular-distance t-SNE visualizes 6,400 context-writing and 6,400 denoising gradient samples per module family using coordinate-sampled gradient representations. Color indicates the denoising timestep. (b) Mean gradient directions and relative lengths are estimated from the sampled coordinates, with the context mean length normalized to one in each family. (c) Paired cosine similarities are computed from the full selected weight gradients. Points show video--timestep pairs, while lines and bands show the median and interquartile range across videos.}
  \label{fig:tf-init-role-gradients}
  \end{minipage}
  \par\medskip
  \raggedright
  Figure~\ref{fig:tf-init-role-gradients} provides additional visualizations of the gradient relationships between context writing and denoising at TF initialization. The two roles exhibit distinct gradient distributions in Attention and FFN, with mean directions forming angles of $105.0^{\circ}$ and $114.5^{\circ}$, respectively, both exceeding $90^{\circ}$. The vast majority of paired cosine similarities are negative, indicating that role separation and gradient conflict are already present at TF initialization and supporting separation of the two roles from the TF stage.
\section{24-Hour Generation Results}
  \label{sec:appendix-24h}
  Figure~\ref{fig:qualitative-24h} presents four examples of continuous 24-hour generation by SGF+ trained on 5s video windows, without long-video fine-tuning.
  \par\medskip
  \noindent\begin{minipage}{\textwidth}
  \centering
  \captionsetup{type=figure}
  \includegraphics[width=\textwidth,trim=0 22bp 0 0,clip]{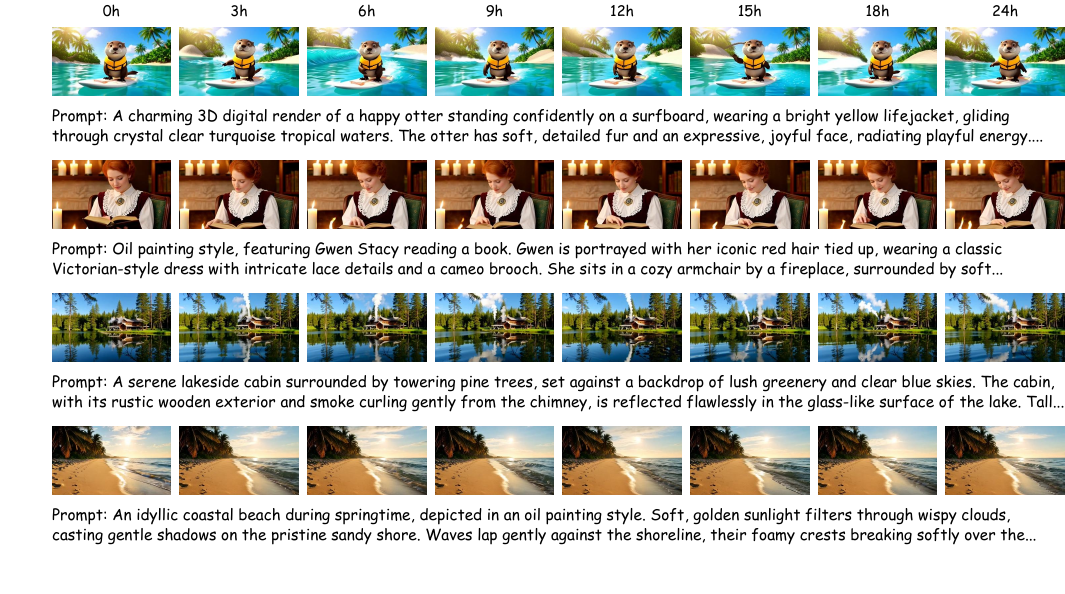}
  \caption{\textbf{24-hour generation results of SGF+.} SGF+ is trained on 5s video windows without long-video fine-tuning.}
  \label{fig:qualitative-24h}
  \end{minipage}
  \raggedright
  \par
\clearpage
\section{Additional Analysis of Gradient Conflict}
\label{sec:appendix-layers}
\noindent\begin{minipage}{\textwidth}
  \centering
  \captionsetup{type=figure,skip=4pt}
  \includegraphics[width=\textwidth]{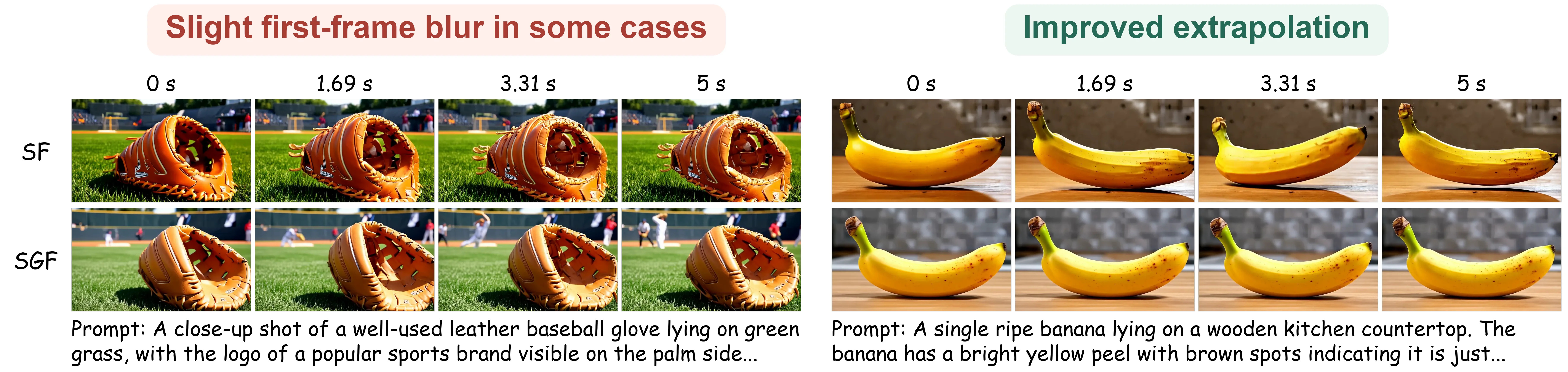}
  \caption{\textbf{Qualitative results of gradient conflict.} Restoring context-writing gradients improves temporal extrapolation but also constrains denoising optimization, leading to a slight reduction in visual quality in some cases. In the left example, SF produces sharper details in the baseball glove and grass. In the right example, SGF maintains better temporal consistency with less error accumulation.}
  \label{fig:sf-sgf-quality-consistency}
\end{minipage}
\par\vspace{10pt}
\noindent\begin{minipage}{\textwidth}
  \centering
  \captionsetup{type=figure,skip=4pt}
  \includegraphics[width=\textwidth]{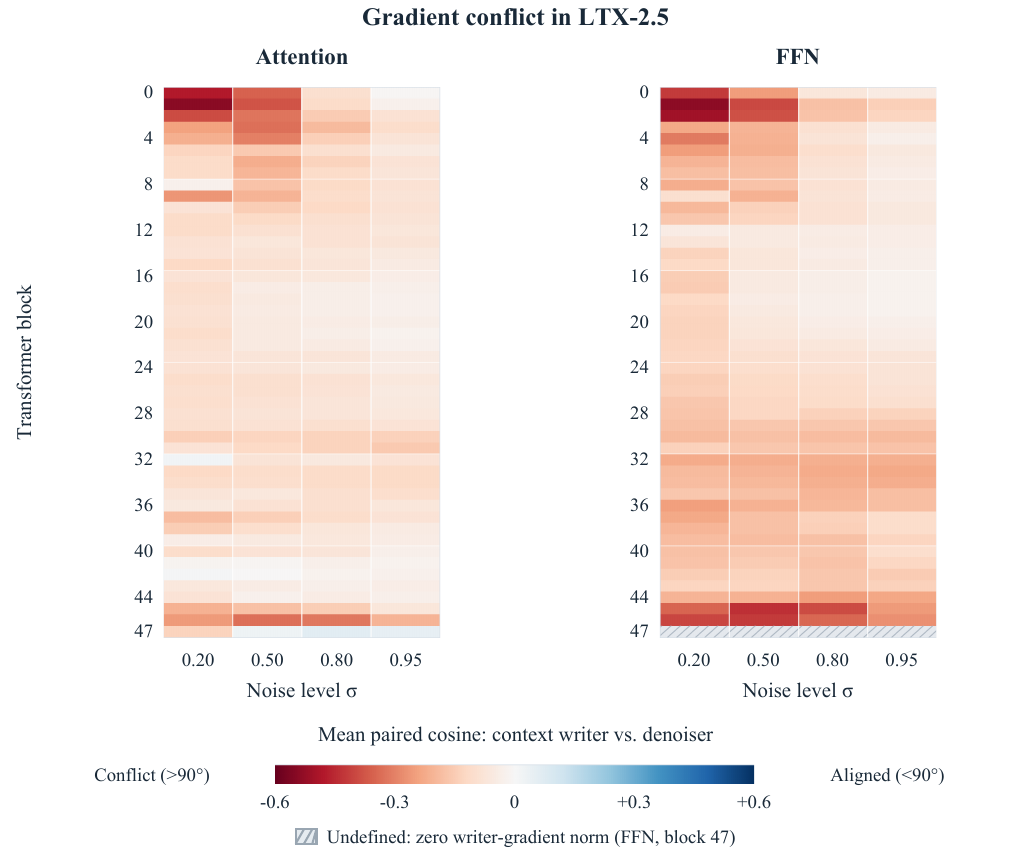}
  \caption{\textbf{Gradient conflict in the larger-scale, 22B-parameter LTX-2.5 model.} These measurements further confirm the presence of gradient conflict between context writing and denoising in a larger model. Each cell shows the mean paired gradient cosine similarity at a given layer and noise level. Red indicates negative cosine similarity, while blue indicates positive cosine similarity. In FFN block 47, the context-writing gradient has zero norm, making the cosine similarity undefined, as indicated by hatching.}
  \label{fig:ltx25-gradient-conflict}
\end{minipage}
\clearpage
\noindent\begin{minipage}{\textwidth}
  \captionsetup{type=figure}
  \setlength{\abovecaptionskip}{4pt}
  \centering
  \includegraphics[width=\textwidth,trim=0 0 0 5bp,clip]{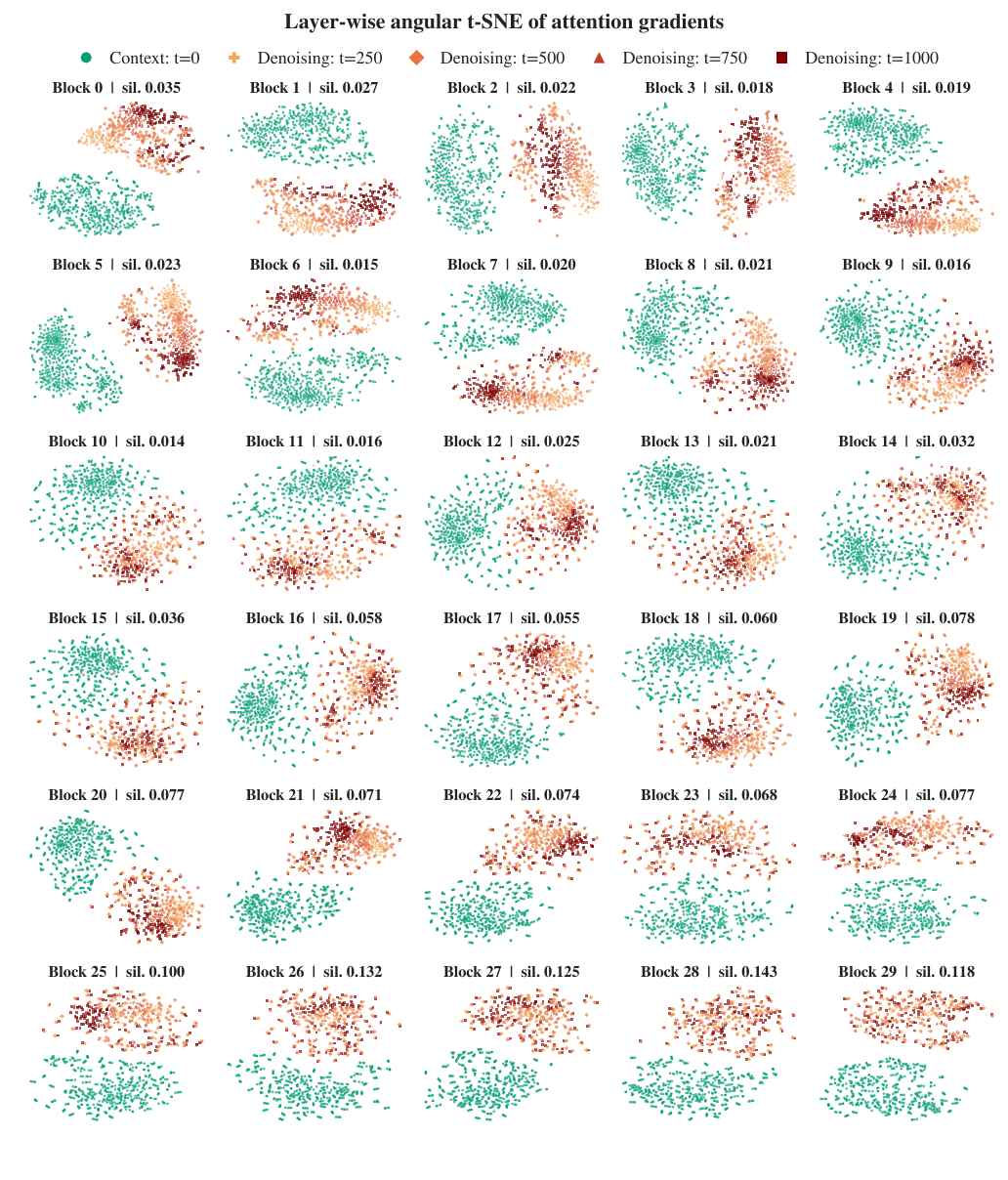}
  \caption{\textbf{Layer-wise gradient distributions in Attention.} We analyze SGF gradients over 128 prompts and 4 denoising timesteps, concatenating the Q/K/V/O weight gradients within each block. Angular-distance t-SNE uses gradient sketches with perplexity 20. \textit{sil.} is the role silhouette score computed from angular distances before projection. Higher scores indicate stronger role separation.}
  \label{fig:layer-attention-tsne}
\end{minipage}
\begin{figure}[!htp]
  \setlength{\abovecaptionskip}{4pt}
  \centering
  \includegraphics[width=\textwidth]{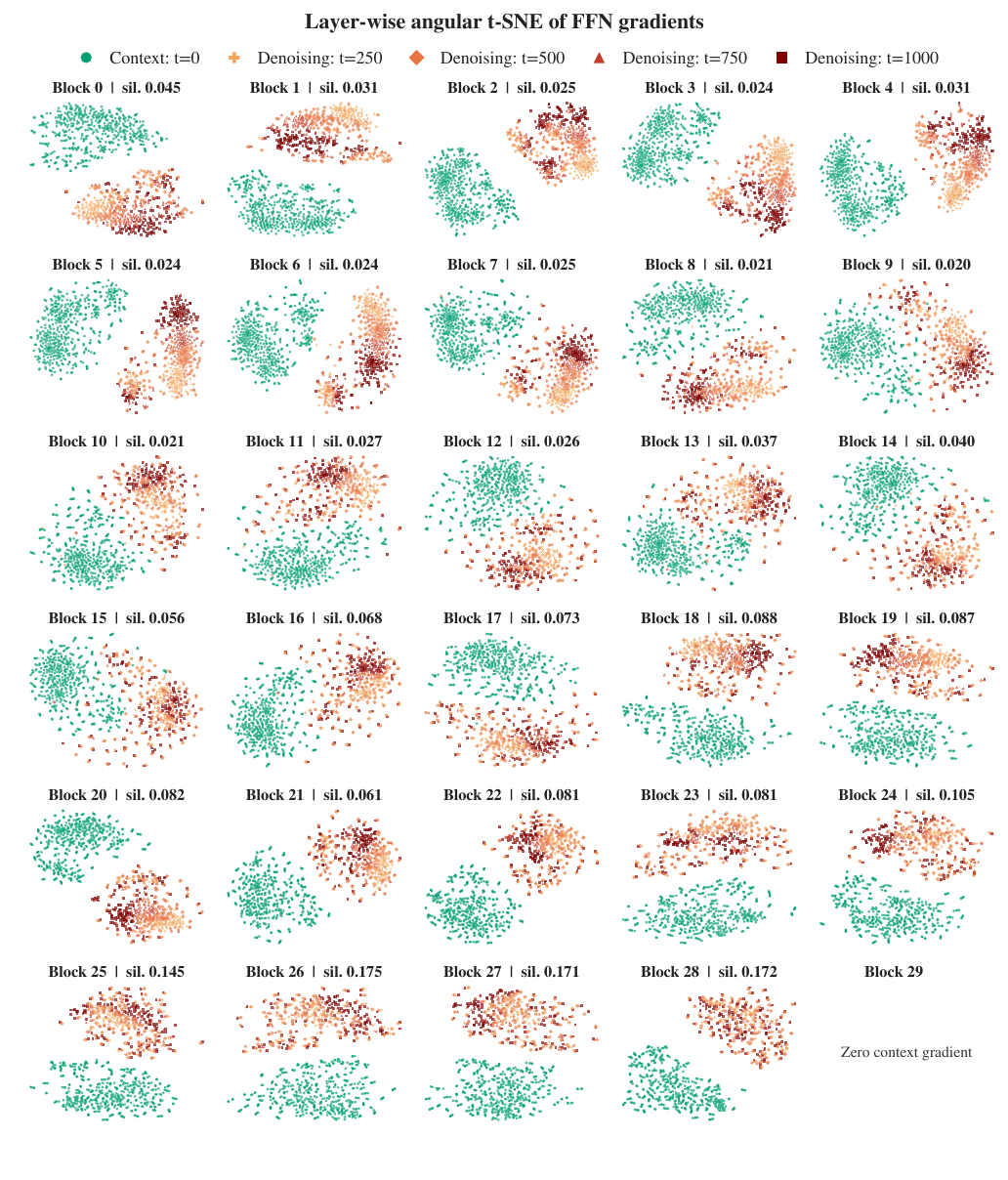}
  \caption{\textbf{Layer-wise gradient distributions in FFN.} We concatenate the up/down weight gradients within each block and use the settings in Figure~\ref{fig:layer-attention-tsne}. Block 29 has zero context-writing gradients for all 512 prompt--timestep pairs, so its angular distances are undefined and no t-SNE is shown. \textit{sil.} denotes the role silhouette score before projection, with higher values indicating stronger role separation.}
  \label{fig:layer-ffn-tsne}
\end{figure}
\FloatBarrier
\section{Additional Qualitative Comparisons}
\label{sec:appendix-qualitative}
Figures~\ref{fig:qualitative-chunkwise-supp-1}--\ref{fig:qualitative-chunkwise-supp-4} present twelve chunkwise cases, and Figures~\ref{fig:qualitative-framewise-supp-1}--\ref{fig:qualitative-framewise-supp-2} present six framewise cases, with matched prompts across SF, SGF, and SGF+.

\begin{figure}[!ht]
  \centering
  \includegraphics[width=\textwidth]{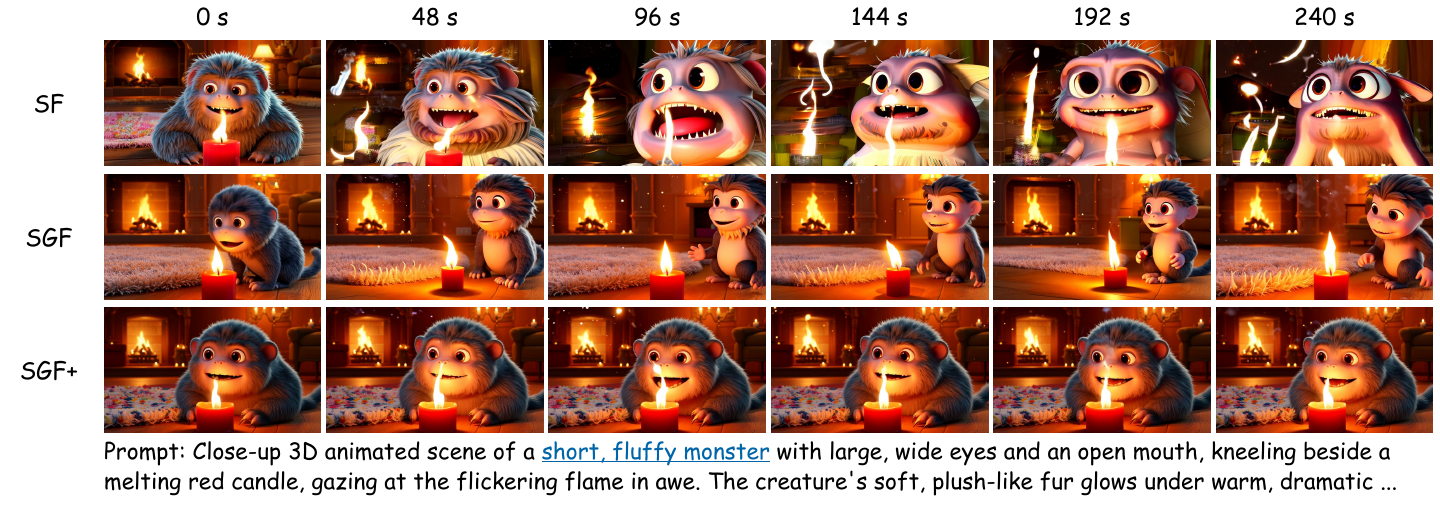}
  \vspace{2pt}
  \includegraphics[width=\textwidth]{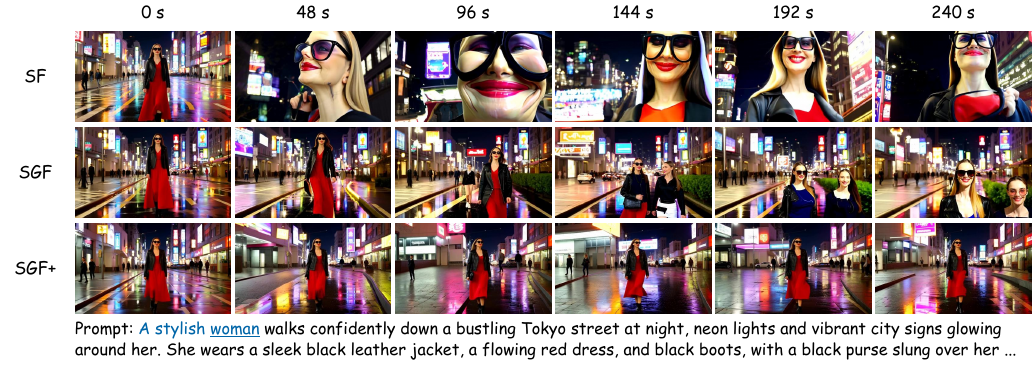}
  \vspace{2pt}
  \includegraphics[width=\textwidth]{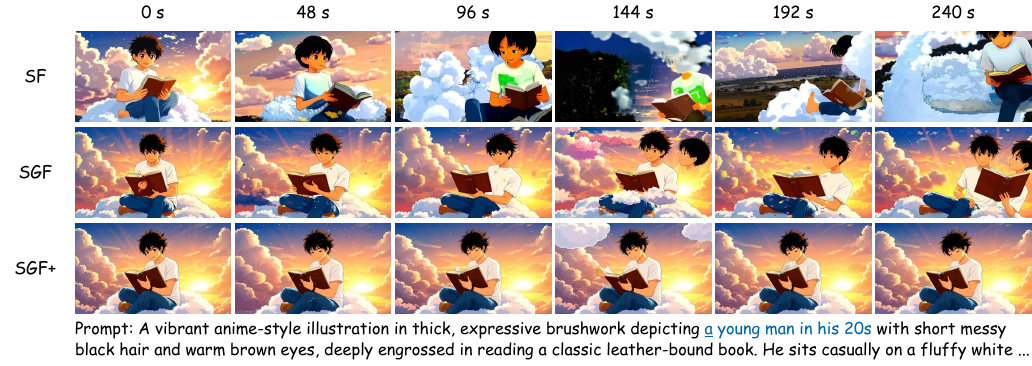}
  \caption{\textbf{Additional qualitative comparisons in chunkwise generation.} SF, SGF, and SGF+ are compared in three scenes. From top to bottom: a fluffy monster gazing at a candle, a neon-lit Tokyo street, and a person reading.}
  \label{fig:qualitative-chunkwise-supp-1}
\end{figure}

\begin{figure}[p]
  \centering
  \includegraphics[width=\textwidth]{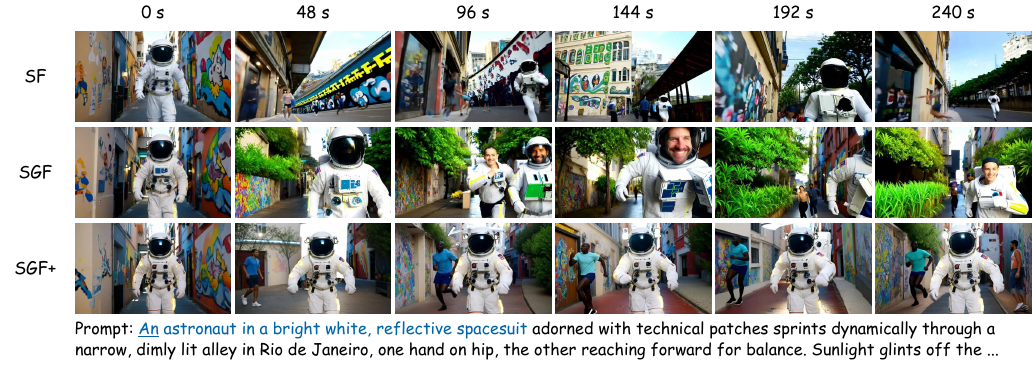}
  \vspace{2pt}
  \includegraphics[width=\textwidth]{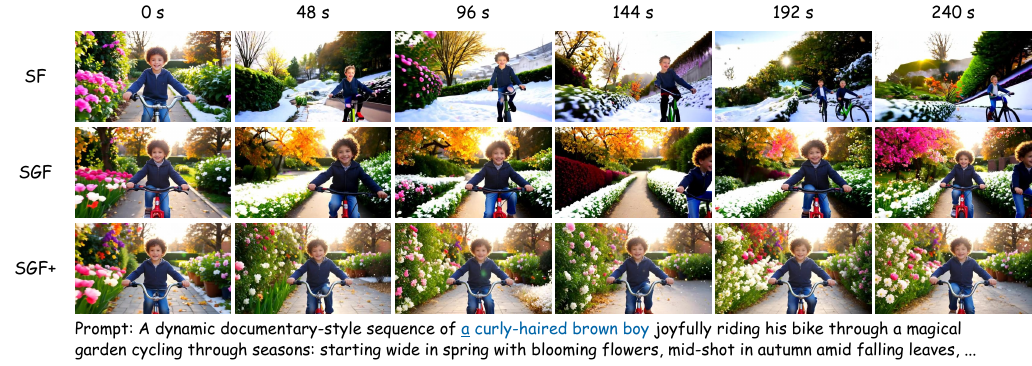}
  \vspace{2pt}
  \includegraphics[width=\textwidth]{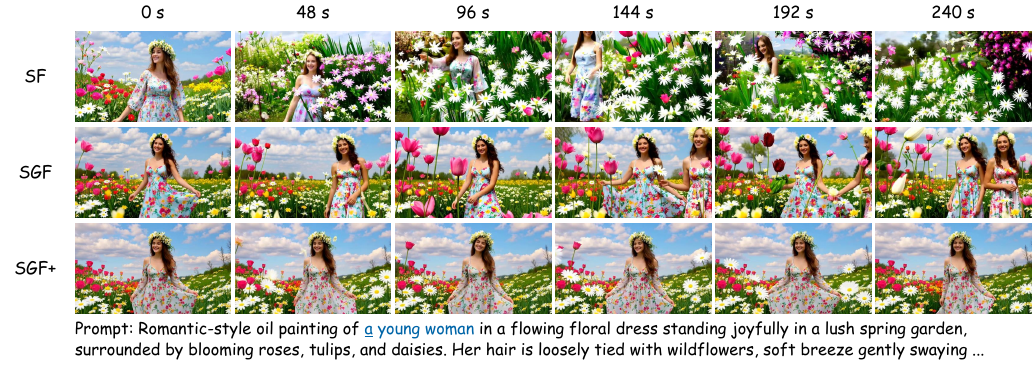}
  \caption{\textbf{Additional qualitative comparisons in chunkwise generation.} SF, SGF, and SGF+ are compared in three scenes. From top to bottom: an astronaut, a boy riding a bicycle, and a woman in a flower garden.}
  \label{fig:qualitative-chunkwise-supp-2}
\end{figure}

\begin{figure}[p]
  \centering
  \includegraphics[width=\textwidth]{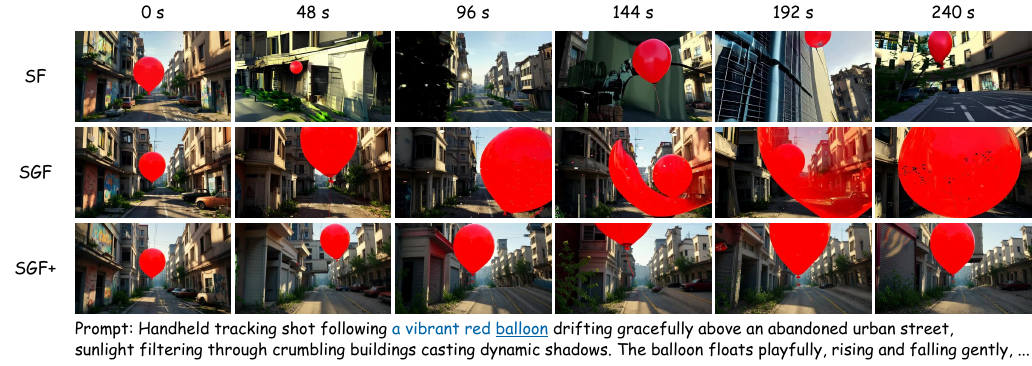}
  \vspace{2pt}
  \includegraphics[width=\textwidth]{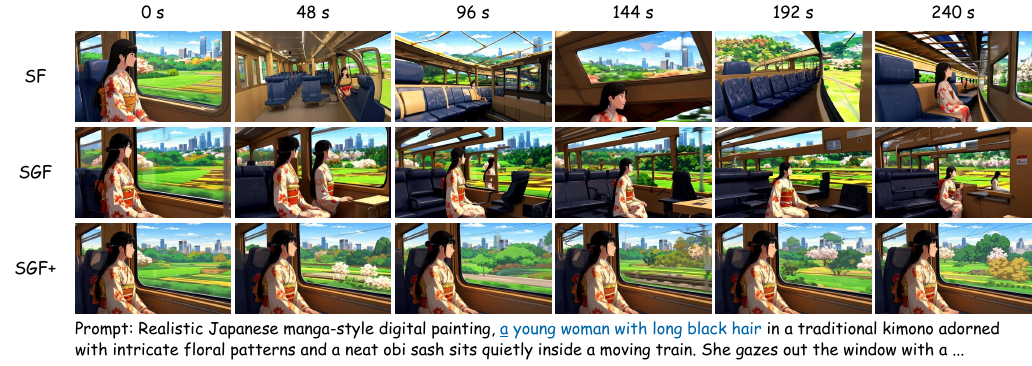}
  \vspace{2pt}
  \includegraphics[width=\textwidth]{figures/qualitative_chunkwise/chunkwise_240s_11.pdf}
  \caption{\textbf{Additional qualitative comparisons in chunkwise generation.} SF, SGF, and SGF+ are compared in three scenes. From top to bottom: a red balloon drifting through an urban street, a woman aboard a train, and a sunflower by a windowsill.}
  \label{fig:qualitative-chunkwise-supp-3}
\end{figure}

\begin{figure}[p]
  \centering
  \includegraphics[width=\textwidth]{figures/qualitative_chunkwise/chunkwise_240s_12.pdf}
  \vspace{2pt}
  \includegraphics[width=\textwidth]{figures/qualitative_chunkwise/chunkwise_240s_13.pdf}
  \vspace{2pt}
  \includegraphics[width=\textwidth]{figures/qualitative_chunkwise/chunkwise_240s_14.pdf}
  \caption{\textbf{Additional qualitative comparisons in chunkwise generation.} SF, SGF, and SGF+ are compared in three scenes. From top to bottom: a grandmother celebrating her birthday, a woman in front of fireworks, and a man eating noodles with chopsticks.}
  \label{fig:qualitative-chunkwise-supp-4}
\end{figure}

\begin{figure}[p]
  \centering
  \includegraphics[width=\textwidth]{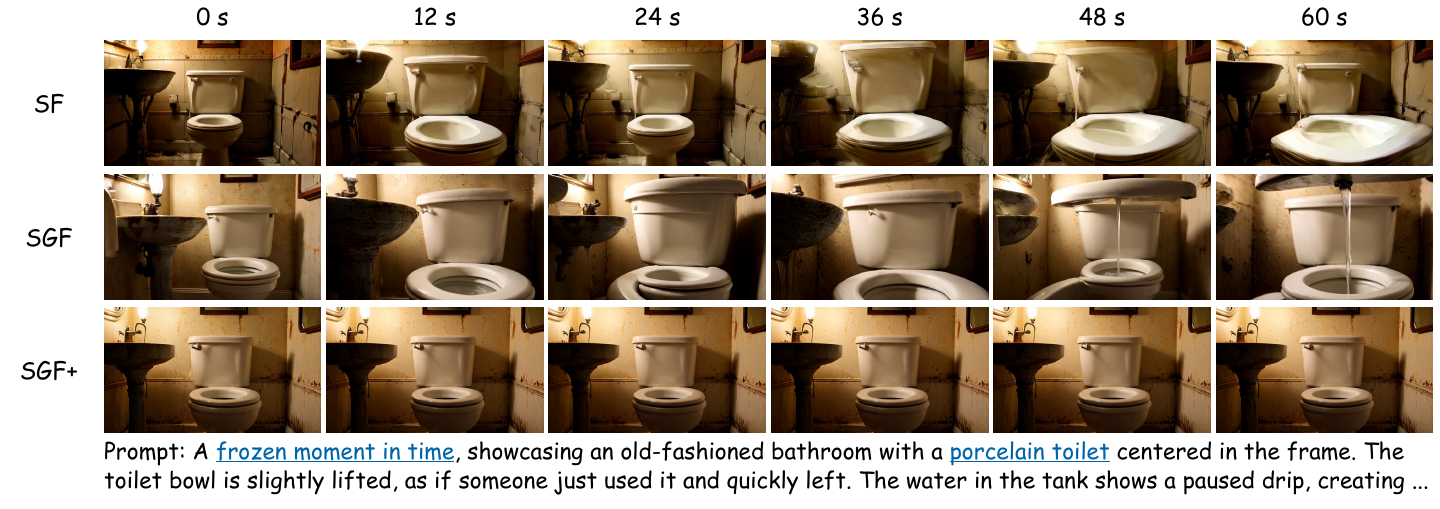}
  \vspace{2pt}
  \includegraphics[width=\textwidth]{figures/qualitative_framewise/framewise_240s_02.pdf}
  \vspace{2pt}
  \includegraphics[width=\textwidth]{figures/qualitative_framewise/framewise_240s_03.pdf}
  \caption{\textbf{Additional qualitative comparisons in framewise generation.} SF, SGF, and SGF+ are compared in three scenes. From top to bottom: a porcelain toilet in an old-fashioned bathroom, a white cat driving a red car, and a volcanic eruption in a coffee cup.}
  \label{fig:qualitative-framewise-supp-1}
\end{figure}

\begin{figure}[p]
  \centering
  \includegraphics[width=\textwidth]{figures/qualitative_framewise/framewise_240s_04.pdf}
  \vspace{2pt}
  \includegraphics[width=\textwidth]{figures/qualitative_framewise/framewise_60s_02.pdf}
  \vspace{2pt}
  \includegraphics[width=\textwidth]{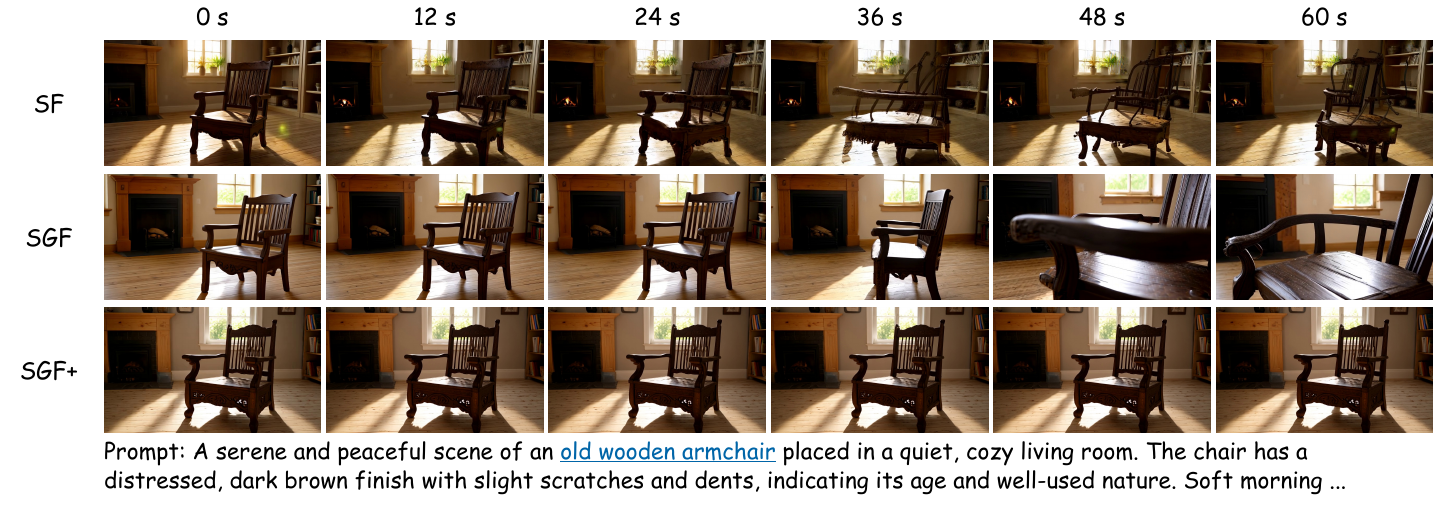}
  \caption{\textbf{Additional qualitative comparisons in framewise generation.} SF, SGF, and SGF+ are compared in three scenes. From top to bottom: a kangaroo dancing disco, Art Deco lampposts, and an old wooden armchair in a living room.}
  \label{fig:qualitative-framewise-supp-2}
\end{figure}

\FloatBarrier
\begingroup
\newcommand{\norm}[1]{\left\lVert#1\right\rVert}
\newcommand{\inner}[2]{\left\langle#1,#2\right\rangle}
\section{Local Optimization Analysis of Role Separation}
\label{sec:role-separation-theory}
We analyze the local effect of separating context-writing and denoising parameters in SGF+. Two comparison protocols distinguish the restriction imposed by parameter sharing from its effect on a gradient-descent step: a common update budget and a common learning rate.

\subsection{Fixed-replay objective and comparison geometry}
Fix the detached rollout latents, sampled denoising exits, DMD supervision, and all forward randomness. The resulting Pass-2 surrogate is
\begin{equation}
 F(\theta_c,\theta_d)
 =\ell\!\left(\mathcal D_{\theta_d}
 \left(Z^\star,t^\star\mid
 \mathcal C_{\theta_c}(\operatorname{sg}(X),0;\mathcal M_{\mathrm{rec}}),
 \mathcal M_{\mathrm{rec}}\right)\right)
 \label{eq:theory-objective}
\end{equation}
where $\ell$ uses the fixed supervision. Let $\theta_c,\theta_d\in\mathbb{R}^p$ have matching coordinates, such that tying them reproduces the shared forward computation. At $x_0=(\theta,\theta)$, define the full role-wise partial derivatives
\begin{equation}
 g_{\mathrm{C}}=\nabla_{\theta_c}F(x_0),\qquad
 g_{\mathrm{D}}=\nabla_{\theta_d}F(x_0),\qquad g=(g_{\mathrm{C}},g_{\mathrm{D}}).
\end{equation}
The writer derivative passes through the differentiable KV state. Both derivatives belong to the same objective, and the tied objective $f(\theta)=F(\theta,\theta)$ satisfies $\nabla f(\theta)=g_{\mathrm{C}}+g_{\mathrm{D}}$.

At this common point, impose the Euclidean product-space budget $\norm{\delta_c}^2+\norm{\delta_d}^2\le r^2$, with $r>0$. A tied update $(u,u)$ therefore costs $2\norm{u}^2$. For a subspace $V\subseteq\mathbb{R}^{2p}$ of admissible updates, define the maximum first-order decrease
\begin{equation}
 D_V(r)=\max_{\delta\in V,\,\norm{\delta}\le r}
             -\inner{g}{\delta}.
 \label{eq:theory-descent}
\end{equation}
This budget matches parameter displacement in the specified geometry; it does not match parameter count or computational cost.

\subsection{Descent lost under parameter sharing}
\begin{proposition}[Shared-parameter descent restriction]
\label{prop:sharing-gap}
Let $F$ be differentiable at $x_0$. For independent updates
$V_{\mathrm{split}}=\mathbb{R}^{2p}$ and tied updates
$V_{\mathrm{shared}}=\{(u,u):u\in\mathbb{R}^p\}$,
\begin{align}
 D_{\mathrm{split}}(r)&=r\sqrt{\norm{g_{\mathrm{C}}}^2+\norm{g_{\mathrm{D}}}^2},\\
 D_{\mathrm{shared}}(r)&=\frac{r}{\sqrt{2}}\norm{g_{\mathrm{C}}+g_{\mathrm{D}}}.
\end{align}
In particular,
\begin{equation}
 \boxed{D_{\mathrm{split}}(r)^2-D_{\mathrm{shared}}(r)^2
       =\frac{r^2}{2}\norm{g_{\mathrm{C}}-g_{\mathrm{D}}}^2.}
 \label{eq:theory-sharing-gap}
\end{equation}
\end{proposition}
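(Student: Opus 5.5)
The plan is to evaluate each maximum in \eqref{eq:theory-descent} with Cauchy--Schwarz, then subtract using the parallelogram law.

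\textbf{Split case.} For $V_{\mathrm{split}}=\mathbb{R}^{2p}$, the first-order decrease is $-\inner{g}{\delta}$ with $\norm{\delta}\le r$. Cauchy--Schwarz gives $-\inner{g}{\delta}\le\norm{g}\,\norm{\delta}\le r\norm{g}$. If $g\neq 0$, equality holds at $\delta=-rg/\norm{g}$. If $g=0$, both sides are zero. Since $\norm{g}^2=\norm{g_{\mathrm{C}}}^2+\norm{g_{\mathrm{D}}}^2$ in the product norm, we get $D_{\mathrm{split}}(r)=r\sqrt{\norm{g_{\mathrm{C}}}^2+\norm{g_{\mathrm{D}}}^2}$.

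\textbf{Shared case.} Parametrize tied updates as $\delta=(u,u)$. Then $\inner{g}{\delta}=\inner{g_{\mathrm{C}}+g_{\mathrm{D}}}{u}$ and $\norm{\delta}^2=2\norm{u}^2$, so the budget is $\norm{u}\le r/\sqrt{2}$. This reduces the problem to the same maximization in $\mathbb{R}^p$ with gradient $g_{\mathrm{C}}+g_{\mathrm{D}}$ and radius $r/\sqrt2$. Cauchy--Schwarz gives $-\inner{g_{\mathrm{C}}+g_{\mathrm{D}}}{u}\le \frac{r}{\sqrt2}\norm{g_{\mathrm{C}}+g_{\mathrm{D}}}$. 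The bound is attained at $u=-\frac{r}{\sqrt2}(g_{\mathrm{C}}+g_{\mathrm{D}})/\norm{g_{\mathrm{C}}+g_{\mathrm{D}}}$, or trivially at $u=0$ when the sum vanishes.

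Equivalently, $D_{\mathrm{shared}}(r)=r\norm{P g}$, where $P$ is the orthogonal projection onto the diagonal subspace. One can check $Pg=\tfrac12(g_{\mathrm{C}}+g_{\mathrm{D}},\,g_{\mathrm{C}}+g_{\mathrm{D}})$, whose norm is $\norm{g_{\mathrm{C}}+g_{\mathrm{D}}}/\sqrt2$.

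\textbf{Gap.} Squaring both results and subtracting gives
\[
r^2\Bigl(\norm{g_{\mathrm{C}}}^2+\norm{g_{\mathrm{D}}}^2-\tfrac12\norm{g_{\mathrm{C}}+g_{\mathrm{D}}}^2\Bigr).
\]
By the parallelogram identity, $2\norm{a}^2+2\norm{b}^2=\norm{a+b}^2+\norm{a-b}^2$. Hence the bracket equals $\tfrac12\norm{g_{\mathrm{C}}-g_{\mathrm{D}}}^2$, which is \eqref{eq:theory-sharing-gap}. In projection language, this is the squared norm of the anti-diagonal component $\tfrac12(g_{\mathrm{C}}-g_{\mathrm{D}},\,g_{\mathrm{D}}-g_{\mathrm{C}})$, so the identity is just Pythagoras.

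There is no real obstacle. The only points needing care are the zero-gradient cases in the attainment step, and remembering that a tied update costs $2\norm{u}^2$, which is where the factor $1/\sqrt2$ comes from. Differentiability of $F$ at $x_0$ is used only to define $g$ and to justify the first-order interpretation.
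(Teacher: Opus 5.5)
Your proposal is correct and follows essentially the same route as the paper: Cauchy--Schwarz with explicit maximizers (including the zero-gradient cases), reduction of the tied budget to $\lVert u\rVert\le r/\sqrt{2}$, and expansion of the squared norms (your parallelogram identity) to get the gap. Your projection-onto-the-diagonal remark is not part of the paper's proof of this proposition, but it is exactly the decomposition the paper introduces right afterward and later uses for the partial-separation corollary.
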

\begin{proof}
Cauchy--Schwarz gives $-\inner{g}{\delta}\le r\norm{g}$, attained by
$\delta=-rg/\norm{g}$ when $g\ne0$. For tied updates, the budget becomes
$\norm{u}\le r/\sqrt{2}$ and the linear decrease is
$-\inner{g_{\mathrm{C}}+g_{\mathrm{D}}}{u}$, whose maximum is
$r\norm{g_{\mathrm{C}}+g_{\mathrm{D}}}/\sqrt{2}$. When either maximizing gradient vanishes, the corresponding maximum is zero. Subtracting the squared expressions and expanding the norms gives Eq.~\eqref{eq:theory-sharing-gap}.
\end{proof}

\subsection{Gradient cancellation and stationary points}
The orthogonal decomposition into shared and role-dependent directions is
\begin{equation}
 P_{\mathrm{shared}}g=
 \left(\frac{g_{\mathrm{C}}+g_{\mathrm{D}}}{2},\frac{g_{\mathrm{C}}+g_{\mathrm{D}}}{2}\right),\qquad
 g-P_{\mathrm{shared}}g=
 \left(\frac{g_{\mathrm{C}}-g_{\mathrm{D}}}{2},\frac{g_{\mathrm{D}}-g_{\mathrm{C}}}{2}\right).
 \label{eq:theory-projection}
\end{equation}
The second component, with squared norm $\norm{g_{\mathrm{C}}-g_{\mathrm{D}}}^2/2$, is excluded by parameter sharing.

When $E=\norm{g_{\mathrm{C}}}^2+\norm{g_{\mathrm{D}}}^2>0$, the fraction of squared gradient norm outside the tied-update subspace is
\begin{equation}
 \Gamma=\frac{\norm{g_{\mathrm{C}}-g_{\mathrm{D}}}^2}{2E}
       =1-\frac{D_{\mathrm{shared}}(r)^2}{D_{\mathrm{split}}(r)^2}
       =\frac12-\frac{\inner{g_{\mathrm{C}}}{g_{\mathrm{D}}}}{E}.
 \label{eq:theory-gamma}
\end{equation}
Thus $0\le\Gamma\le1$, with $\Gamma=0$ exactly when $g_{\mathrm{C}}=g_{\mathrm{D}}$ and $\Gamma=1$ exactly when $g_{\mathrm{C}}=-g_{\mathrm{D}}\ne0$. Negative alignment implies $\Gamma>1/2$, but any unequal role gradients yield $\Gamma>0$. Hence $\Gamma$ measures the sharing restriction, which is distinct from the negative-alignment criterion for gradient conflict.

For example, $g_{\mathrm{C}}=(2,1)$ and $g_{\mathrm{D}}=(2,-1)$ have cosine similarity $3/5$, while their second coordinates cancel in the shared gradient $(4,0)$. Nevertheless, $\Gamma=1/5>0$, illustrating a sharing restriction under positive overall alignment.

\begin{corollary}[Stationarity induced by cancellation]
\label{cor:cancellation}
If $g_{\mathrm{C}}=-g_{\mathrm{D}}\ne0$ at $x_0=(\theta,\theta)$, then $\theta$ is a stationary point of the tied objective $f$, whereas $x_0$ is not a stationary point of $F$. If, additionally, $\nabla F$ is $L$-Lipschitz on a neighborhood containing the segment from $x_0$ to $x_0-\eta g$, where $L>0$ and $0<\eta<2/L$, then
\begin{equation}
 F(x_0-\eta g)
 \le F(x_0)
 -\eta\left(1-\frac{L\eta}{2}\right)
       \left(\norm{g_{\mathrm{C}}}^2+\norm{g_{\mathrm{D}}}^2\right)
 < F(x_0).
 \label{eq:theory-cancellation-descent}
\end{equation}
\end{corollary}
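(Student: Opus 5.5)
The proof splits into two parts: the stationarity claims, which follow directly from the gradient identities already set up, and the descent inequality, which comes from the standard descent lemma for functions with Lipschitz gradient.

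\emph{Stationarity.} We use the identity $\nabla f(\theta)=g_{\mathrm{C}}+g_{\mathrm{D}}$ noted for the tied objective $f(\theta)=F(\theta,\theta)$. This holds by the chain rule, since the tying map $\theta\mapsto(\theta,\theta)$ has Jacobian $[I;I]$. Under $g_{\mathrm{C}}=-g_{\mathrm{D}}$ we get $\nabla f(\theta)=0$, so $\theta$ is stationary for $f$. On the other hand, $\nabla F(x_0)=g=(g_{\mathrm{C}},g_{\mathrm{D}})$ has squared norm $\norm{g_{\mathrm{C}}}^2+\norm{g_{\mathrm{D}}}^2=2\norm{g_{\mathrm{C}}}^2>0$ because $g_{\mathrm{C}}\ne0$. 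Hence $x_0$ is not stationary for $F$.

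\emph{Descent inequality.} Set $\phi(s)=F(x_0-s\eta g)$ for $s\in[0,1]$. The whole segment lies in the neighborhood where $\nabla F$ is $L$-Lipschitz, so $F$ is $C^1$ along it and the fundamental theorem of calculus applies. This gives
\[
F(x_0-\eta g)-F(x_0)=-\eta\inner{g}{g}-\eta\int_0^1\inner{\nabla F(x_0-s\eta g)-\nabla F(x_0)}{g}\,ds .
\]
By Cauchy--Schwarz and the Lipschitz bound, the integrand has absolute value at most $Ls\eta\norm{g}^2$. Integrating over $s\in[0,1]$ bounds the remainder term by $\tfrac{L\eta^2}{2}\norm{g}^2$. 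Therefore
\[
F(x_0-\eta g)\le F(x_0)-\eta\left(1-\tfrac{L\eta}{2}\right)\norm{g}^2,
\]
and $\norm{g}^2=\norm{g_{\mathrm{C}}}^2+\norm{g_{\mathrm{D}}}^2$.

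\emph{Strictness and the main subtlety.} The strict inequality follows because $0<\eta<2/L$ makes $\eta(1-L\eta/2)>0$, and $\norm{g}^2>0$ was shown in the stationarity step. The only delicate point is that Lipschitz continuity is assumed only on a neighborhood of the segment, not globally. I would handle this by writing the descent lemma in its integral form along the segment, as above, rather than citing a global version. This requires only differentiability along the path, which the local Lipschitz hypothesis supplies.
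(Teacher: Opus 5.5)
Your proof is correct and follows the same route as the paper. The chain rule gives $\nabla f(\theta)=g_{\mathrm{C}}+g_{\mathrm{D}}=0$ while $\nabla F(x_0)=g\neq 0$, and the descent lemma along the segment from $x_0$ to $x_0-\eta g$ gives the bound, strict since $0<\eta<2/L$ and $g\neq 0$. Your integral-form argument, which uses only the local Lipschitz hypothesis, is a written-out version of the paper's one-line appeal to the descent lemma.
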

\begin{proof}
The chain rule gives $\nabla f(\theta)=0$, whereas $\nabla F(x_0)=g\ne0$. The descent lemma along the update segment yields Eq.~\eqref{eq:theory-cancellation-descent}, whose decrease is strict for $0<\eta<2/L$.
\end{proof}

Complete cancellation is a sufficient condition for this stationary-point separation; negative alignment alone does not imply stationarity.

\subsection{Finite-step descent under a common update budget}
The first-order gap in Proposition~\ref{prop:sharing-gap} yields a finite-step advantage for the corresponding normalized updates, without requiring negative alignment.

\begin{corollary}[Finite-step advantage under a common budget]
\label{cor:common-budget-step}
Suppose $\nabla F$ is $L$-Lipschitz on an open neighborhood of the closed ball $B(x_0,\rho)$, with $L,\rho>0$, and $g_{\mathrm{C}}\ne g_{\mathrm{D}}$. Let
\begin{equation}
 a=\norm{g},\qquad b=\norm{P_{\mathrm{shared}}g},
 \qquad a>b\ge0.
\end{equation}
For $0<r\le\rho$, choose the updates that maximize first-order decrease under the product-space budget:
\begin{equation}
 \delta_{\mathrm{split}}^{(r)}=-\frac{r}{a}g,\qquad
 \delta_{\mathrm{shared}}^{(r)}=
 \begin{cases}
  -\dfrac{r}{b}P_{\mathrm{shared}}g,&b>0,\\
  0,&b=0.
 \end{cases}
 \label{eq:common-budget-updates}
\end{equation}
Then
\begin{equation}
 F(x_0+\delta_{\mathrm{shared}}^{(r)})
 -F(x_0+\delta_{\mathrm{split}}^{(r)})
 \ge r(a-b)-Lr^2.
 \label{eq:common-budget-bound}
\end{equation}
Consequently, for
\begin{equation}
 0<r<\min\left\{\rho,\frac{a-b}{L}\right\},
 \label{eq:common-budget-threshold}
\end{equation}
the split update strictly outperforms the shared update and strictly decreases the surrogate:
\begin{equation}
 F(x_0+\delta_{\mathrm{split}}^{(r)})
 <F(x_0+\delta_{\mathrm{shared}}^{(r)}),\qquad
 F(x_0+\delta_{\mathrm{split}}^{(r)})<F(x_0).
\end{equation}
\end{corollary}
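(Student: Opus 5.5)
The plan is to bound $F$ at each normalized update using the descent lemma in both directions, and then subtract the two bounds.

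\textbf{Upper and lower quadratic bounds.} Since $\nabla F$ is $L$-Lipschitz on an open neighborhood of $B(x_0,\rho)$, and $\norm{\delta^{(r)}_{\mathrm{split}}}=r$ and $\norm{\delta^{(r)}_{\mathrm{shared}}}\le r\le\rho$, both segments from $x_0$ lie in the ball. The integral form of Taylor's theorem then gives, for $\delta$ with $\norm{\delta}\le\rho$,
\begin{equation*}
 \left|F(x_0+\delta)-F(x_0)-\inner{g}{\delta}\right|\le\frac{L}{2}\norm{\delta}^2 .
\end{equation*}

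\textbf{Linear terms.} For the split update, $\inner{g}{\delta^{(r)}_{\mathrm{split}}}=-ra$, so
\begin{equation*}
 F(x_0+\delta^{(r)}_{\mathrm{split}})\le F(x_0)-ra+\frac{L}{2}r^2 .
\end{equation*}
For the shared update with $b>0$, $P_{\mathrm{shared}}$ is an orthogonal projection, so $\inner{g}{P_{\mathrm{shared}}g}=\norm{P_{\mathrm{shared}}g}^2=b^2$. Hence the linear term is $-rb$, and
\begin{equation*}
 F(x_0+\delta^{(r)}_{\mathrm{shared}})\ge F(x_0)-rb-\frac{L}{2}r^2 .
\end{equation*}
When $b=0$, the update is zero and $F(x_0+0)=F(x_0)$ satisfies the same lower bound trivially. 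The value $b$ matches $D_{\mathrm{shared}}(r)/r$ from Proposition~\ref{prop:sharing-gap}, since $\norm{P_{\mathrm{shared}}g}=\norm{g_{\mathrm{C}}+g_{\mathrm{D}}}/\sqrt2$ by Eq.~\eqref{eq:theory-projection}.

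\textbf{Combining.} Subtracting the split upper bound from the shared lower bound gives Eq.~\eqref{eq:common-budget-bound}:
\begin{equation*}
 F(x_0+\delta^{(r)}_{\mathrm{shared}})-F(x_0+\delta^{(r)}_{\mathrm{split}})\ge r(a-b)-Lr^2 .
\end{equation*}
The inequality $a>b$ holds because $a^2-b^2=\norm{g_{\mathrm{C}}-g_{\mathrm{D}}}^2/2>0$ by Eq.~\eqref{eq:theory-projection} and the hypothesis $g_{\mathrm{C}}\ne g_{\mathrm{D}}$. For $r<(a-b)/L$, the right-hand side equals $r\bigl((a-b)-Lr\bigr)>0$, which gives strict improvement of the split update over the shared one.

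\textbf{Strict decrease.} The split upper bound gives $F(x_0+\delta^{(r)}_{\mathrm{split}})<F(x_0)$ whenever $r<2a/L$. This condition holds because $r<(a-b)/L\le a/L<2a/L$.

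The only delicate point is keeping the argument within the Lipschitz region. The open neighborhood of the closed ball ensures the segments stay where the Taylor bound applies. Everything else is routine.
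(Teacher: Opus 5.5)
Your proof is correct and follows essentially the same route as the paper. The paper also uses the two-sided quadratic Taylor bound on the smoothness ball, obtains linear terms $-ra$ and $-rb$ (you additionally justify the latter via $\langle g, P_{\mathrm{shared}}g\rangle = b^2$), subtracts the remainders to get $r(a-b)-Lr^2$, and uses $r<(a-b)/L\le a/L<2a/L$ for the strict decrease, with your treatment of the $b=0$ case and the segment containment simply spelling out what the paper's compressed proof leaves implicit.
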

\begin{proof}
Equation~\eqref{eq:theory-projection} gives $a^2-b^2=\norm{g_{\mathrm{C}}-g_{\mathrm{D}}}^2/2>0$. For $\norm{\delta}\le\rho$, smoothness yields
\begin{equation}
 F(x_0+\delta)=F(x_0)+\inner{g}{\delta}+R(\delta),
 \qquad |R(\delta)|\le\frac{L}{2}\norm{\delta}^2.
 \label{eq:theory-taylor}
\end{equation}
The two updates have norm at most $r$ and linear terms $-ra$ and $-rb$, respectively, including $b=0$. Subtracting their expansions bounds the remainder difference by $Lr^2$, giving Eq.~\eqref{eq:common-budget-bound}. Equation~\eqref{eq:common-budget-threshold} makes this bound positive and also ensures
\begin{equation}
 F(x_0+\delta_{\mathrm{split}}^{(r)})
 \le F(x_0)-ra+\frac{Lr^2}{2}<F(x_0),
\end{equation}
because $r<(a-b)/L\le a/L<2a/L$.
\end{proof}

For $b>0$, both steps have product-space norm $r$; for $b=0$, the shared step is zero. The comparison concerns the normalized updates in Eq.~\eqref{eq:common-budget-updates}.

\subsection{Negative alignment under a common learning rate}
\label{sec:equal-step-theory}
We next compare ordinary gradient-descent steps with a common learning rate $\eta$ for the shared parameter vector and each independent role vector.

Let $h=g_{\mathrm{C}}+g_{\mathrm{D}}$, $E=\norm{g_{\mathrm{C}}}^2+\norm{g_{\mathrm{D}}}^2$, and $S=\norm{h}^2$. In the product space, the updates are
\begin{equation}
 \delta_{\mathrm{shared}}=-\eta(h,h),\qquad
 \delta_{\mathrm{split}}=-\eta(g_{\mathrm{C}},g_{\mathrm{D}}).
 \label{eq:equal-step-updates}
\end{equation}
Their squared norms are $2\eta^2S$ and $\eta^2E$, respectively, so this protocol generally uses different displacement budgets.

\begin{proposition}[One-step advantage under negative alignment]
\label{prop:negative-alignment}
Suppose $\nabla F$ is $L$-Lipschitz on an open neighborhood of the closed ball $B(x_0,\rho)$, with $L,\rho>0$. If $\inner{g_{\mathrm{C}}}{g_{\mathrm{D}}}=-\kappa<0$, set
$K=E+2S$ and $M=\max\{\sqrt{E},\sqrt{2S}\}$. For $\eta>0$ with $\eta M\le\rho$,
\begin{equation}
 \left|F(x_0+\delta_{\mathrm{shared}})
       -F(x_0+\delta_{\mathrm{split}})-2\eta\kappa\right|
 \le \frac{L\eta^2}{2}K.
 \label{eq:equal-step-bound}
\end{equation}
Consequently, whenever
\begin{equation}
 0<\eta<\min\left\{\frac{\rho}{M},\frac{4\kappa}{LK},\frac{2}{L}\right\},
 \label{eq:equal-step-threshold}
\end{equation}
the split update both outperforms the shared update and strictly decreases the surrogate:
\begin{equation}
 F(x_0+\delta_{\mathrm{split}})
 <F(x_0+\delta_{\mathrm{shared}}),\qquad
 F(x_0+\delta_{\mathrm{split}})<F(x_0).
 \label{eq:equal-step-order}
\end{equation}
\end{proposition}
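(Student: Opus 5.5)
We compare two second-order expansions around $x_0$, reusing the smoothness expansion Eq.~\eqref{eq:theory-taylor} from the proof of Corollary~\ref{cor:common-budget-step}. Both updates must lie in $B(x_0,\rho)$ for that expansion to hold. From Eq.~\eqref{eq:equal-step-updates}, $\norm{\delta_{\mathrm{shared}}}=\eta\sqrt{2S}$ and $\norm{\delta_{\mathrm{split}}}=\eta\sqrt{E}$, both at most $\eta M\le\rho$. Hence
\begin{equation*}
 F(x_0+\delta)=F(x_0)+\inner{g}{\delta}+R(\delta),\qquad |R(\delta)|\le\frac{L}{2}\norm{\delta}^2,
\end{equation*}
for $\delta\in\{\delta_{\mathrm{shared}},\delta_{\mathrm{split}}\}$.

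Next we compute the linear terms. For the shared step, $\inner{g}{\delta_{\mathrm{shared}}}=-\eta\inner{g_{\mathrm{C}}+g_{\mathrm{D}}}{h}=-\eta S$. For the split step, $\inner{g}{\delta_{\mathrm{split}}}=-\eta E$. Expanding $S=\norm{g_{\mathrm{C}}}^2+\norm{g_{\mathrm{D}}}^2+2\inner{g_{\mathrm{C}}}{g_{\mathrm{D}}}=E-2\kappa$ gives a linear-term difference of $-\eta S+\eta E=2\eta\kappa$.

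Subtracting the two expansions gives
\begin{equation*}
F(x_0+\delta_{\mathrm{shared}})-F(x_0+\delta_{\mathrm{split}})-2\eta\kappa=R(\delta_{\mathrm{shared}})-R(\delta_{\mathrm{split}}).
\end{equation*}
The right side is bounded in absolute value by $\frac{L}{2}\eta^2(2S+E)=\frac{L\eta^2}{2}K$, which is Eq.~\eqref{eq:equal-step-bound}.

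For the consequences, Eq.~\eqref{eq:equal-step-bound} gives
\begin{equation*}
F(x_0+\delta_{\mathrm{shared}})-F(x_0+\delta_{\mathrm{split}})\ge 2\eta\kappa-\frac{L\eta^2K}{2}.
\end{equation*}
This is positive exactly when $\eta<4\kappa/(LK)$. Note that $K\ge E>0$, since $\kappa>0$ forces both role gradients to be nonzero. For descent, the expansion of the split step gives
\begin{equation*}
F(x_0+\delta_{\mathrm{split}})\le F(x_0)-\eta E+\frac{L\eta^2E}{2}=F(x_0)-\eta E\left(1-\frac{L\eta}{2}\right),
\end{equation*}
which is strictly below $F(x_0)$ when $\eta<2/L$ and $E>0$. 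Together with $\eta<\rho/M$, the threshold Eq.~\eqref{eq:equal-step-threshold} secures all three requirements, which yields Eq.~\eqref{eq:equal-step-order}.

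The argument has no real obstacle. The one point needing care is justifying the Taylor remainder bound on the ball: the segments from $x_0$ to each update lie in $B(x_0,\rho)$ by convexity, and the $L$-Lipschitz gradient on a neighborhood of that ball gives the standard integral-form remainder. The other step to get right is the identity $S=E-2\kappa$.
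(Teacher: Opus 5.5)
Your proof is correct and follows the paper's argument essentially step for step: both updates lie in the ball (norms $\eta\sqrt{2S},\eta\sqrt{E}\le\eta M\le\rho$), the same smoothness expansion gives linear-term difference $\eta(E-S)=2\eta\kappa$ and summed remainder bound $L\eta^2(E+2S)/2$, and the split step satisfies $F(x_0+\delta_{\mathrm{split}})\le F(x_0)-\eta(1-L\eta/2)E$. Your remark that $\kappa>0$ forces $E>0$ (hence $K,M>0$) matches the paper's closing observation that the threshold interval is nonempty.
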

\begin{proof}
Both updates lie in the smoothness ball, so the expansion in Eq.~\eqref{eq:theory-taylor} applies. The linear terms are $-\eta S$ and $-\eta E$, with difference
$\eta(E-S)=-2\eta\inner{g_{\mathrm{C}}}{g_{\mathrm{D}}}=2\eta\kappa$.
The sum of the remainder bounds is $L\eta^2(E+2S)/2$, proving Eq.~\eqref{eq:equal-step-bound}. In particular,
\begin{equation}
 F(x_0+\delta_{\mathrm{shared}})-F(x_0+\delta_{\mathrm{split}})
 \ge 2\eta\kappa-\frac{L\eta^2}{2}K>0
\end{equation}
when $\eta<4\kappa/(LK)$. Finally,
$F(x_0+\delta_{\mathrm{split}})\le F(x_0)-\eta(1-L\eta/2)E<F(x_0)$
for $\eta<2/L$. Negative alignment ensures $E,K,M>0$, so the interval in Eq.~\eqref{eq:equal-step-threshold} is nonempty.
\end{proof}

Under the same smoothness assumptions, the common-learning-rate comparison has the general expansion
\begin{equation}
 F(x_0+\delta_{\mathrm{shared}})
 -F(x_0+\delta_{\mathrm{split}})
 =-2\eta\inner{g_{\mathrm{C}}}{g_{\mathrm{D}}}+O(\eta^2).
 \label{eq:equal-step-general-alignment}
\end{equation}
A strictly positive inner product therefore favors the shared update for sufficiently small $\eta$, even if some coordinates cancel. This differs from Corollary~\ref{cor:common-budget-step} because the common-learning-rate protocol does not normalize update lengths.

For a minibatch, the proposition requires negative alignment of the aggregated gradients. Negative alignment of individual samples or selected modules does not establish this condition for the full gradient. A selected-block comparison holds with all other parameters fixed.

\subsection{Partial separation across network modules}
Partition the matching role parameters into $B$ disjoint blocks, with gradients $g_{\mathrm{C}}^{(b)}$ and $g_{\mathrm{D}}^{(b)}$. Blocks in $\mathcal S\subseteq\{1,\ldots,B\}$ have independent role updates; the remaining blocks are tied. All blocks share the same total product-space budget.

\begin{corollary}[Residual restriction under partial separation]
\label{cor:partial-separation}
At the common shared point, the maximum first-order decrease $D_{\mathcal S}(r)$ for this partially separated update space satisfies
\begin{equation}
 D_{\mathcal S}(r)^2=r^2\left[
 \sum_{b\in\mathcal S}\left(\norm{g_{\mathrm{C}}^{(b)}}^2+\norm{g_{\mathrm{D}}^{(b)}}^2\right)
 +\frac12\sum_{b\notin\mathcal S}\norm{g_{\mathrm{C}}^{(b)}+g_{\mathrm{D}}^{(b)}}^2
 \right].
 \label{eq:theory-partial-descent}
\end{equation}
Consequently, relative to full separation,
\begin{equation}
 \boxed{D_{\mathrm{full}}(r)^2-D_{\mathcal S}(r)^2
       =\frac{r^2}{2}\sum_{b\notin\mathcal S}
                    \norm{g_{\mathrm{C}}^{(b)}-g_{\mathrm{D}}^{(b)}}^2.}
 \label{eq:theory-partial-gap}
\end{equation}
\end{corollary}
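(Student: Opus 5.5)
The plan is to recognize the partially separated update space as a linear subspace $V_{\mathcal S}\subseteq\mathbb{R}^{2p}$. I would then compute $D_{\mathcal S}(r)$ as $r$ times the norm of the orthogonal projection of $g$ onto $V_{\mathcal S}$, exactly as in the proof of Proposition~\ref{prop:sharing-gap}.

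\textbf{Setting up the subspace.} Write an update as $\delta=(\delta^{(1)},\ldots,\delta^{(B)})$ with $\delta^{(b)}=(\delta_c^{(b)},\delta_d^{(b)})$. The admissible space is then the orthogonal direct sum $V_{\mathcal S}=\bigoplus_b V_b$. Here $V_b=\mathbb{R}^{2p_b}$ for $b\in\mathcal S$, and $V_b=\{(u,u)\}$ for $b\notin\mathcal S$.

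\textbf{Reducing the maximization to a projection.} For any subspace $V$, Cauchy--Schwarz gives
\begin{equation*}
\max_{\delta\in V,\ \norm{\delta}\le r}-\inner{g}{\delta}=r\norm{P_V g},
\end{equation*}
since $\inner{g}{\delta}=\inner{P_Vg}{\delta}$ for $\delta\in V$. The maximum is attained at $\delta=-rP_Vg/\norm{P_Vg}$, or equals $0$ if $P_Vg=0$.

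\textbf{Computing the projection blockwise.} Because the blocks are mutually orthogonal, $P_{V_{\mathcal S}}$ acts blockwise. On a separated block it is the identity, contributing $\norm{g_{\mathrm C}^{(b)}}^2+\norm{g_{\mathrm D}^{(b)}}^2$. On a tied block it is the projection in Eq.~\eqref{eq:theory-projection} applied to that block, namely $\bigl(\tfrac{g_{\mathrm C}^{(b)}+g_{\mathrm D}^{(b)}}{2},\tfrac{g_{\mathrm C}^{(b)}+g_{\mathrm D}^{(b)}}{2}\bigr)$, whose squared norm is $\tfrac12\norm{g_{\mathrm C}^{(b)}+g_{\mathrm D}^{(b)}}^2$. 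Summing these contributions (Pythagoras) yields Eq.~\eqref{eq:theory-partial-descent}.

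\textbf{Deriving the gap.} Take $\mathcal S=\{1,\ldots,B\}$ to obtain
\begin{equation*}
D_{\mathrm{full}}(r)^2=r^2\sum_b\left(\norm{g_{\mathrm C}^{(b)}}^2+\norm{g_{\mathrm D}^{(b)}}^2\right).
\end{equation*}
Subtracting Eq.~\eqref{eq:theory-partial-descent} leaves, for each $b\notin\mathcal S$, the term
\begin{equation*}
\norm{g_{\mathrm C}^{(b)}}^2+\norm{g_{\mathrm D}^{(b)}}^2-\tfrac12\norm{g_{\mathrm C}^{(b)}+g_{\mathrm D}^{(b)}}^2.
\end{equation*}
By the parallelogram identity this equals $\tfrac12\norm{g_{\mathrm C}^{(b)}-g_{\mathrm D}^{(b)}}^2$, which gives Eq.~\eqref{eq:theory-partial-gap}.

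\textbf{Main obstacle.} No step is deep. The only point needing care is justifying that the constrained maximum over $V_{\mathcal S}$ splits across blocks. This follows because the budget is a single joint Euclidean ball, so the orthogonal projection argument applies; the budget is not allocated block by block. The degenerate case $P_Vg=0$ must also be handled, where the maximum is zero and the formula still holds.
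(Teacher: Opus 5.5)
Your proposal is correct and follows essentially the same route as the paper's proof. Both reduce $D_V(r)$ to $r\lVert P_V g\rVert$, take the identity projection on separated blocks and the averaging projection on tied blocks, and add squared norms by block orthogonality. Both then subtract from the full-separation value; your parallelogram-identity step is exactly the blockwise use of Eq.~\eqref{eq:theory-sharing-gap} that the paper invokes.
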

\begin{proof}
For any linear subspace $V$ and orthogonal projection $P_V$,
$\inner{g}{\delta}=\inner{P_Vg}{\delta}$ for $\delta\in V$. Thus
$D_V(r)=r\norm{P_Vg}$, including the zero-projection case. In a separated block the projection retains both gradient components. In a tied block it replaces them by their average, as in Eq.~\eqref{eq:theory-projection}, with squared norm
$\norm{g_{\mathrm{C}}^{(b)}+g_{\mathrm{D}}^{(b)}}^2/2$.
Disjoint parameter blocks are orthogonal, so their squared projection norms add, giving Eq.~\eqref{eq:theory-partial-descent}. Subtracting from
$D_{\mathrm{full}}(r)^2=r^2\sum_b(\norm{g_{\mathrm{C}}^{(b)}}^2+\norm{g_{\mathrm{D}}^{(b)}}^2)$
and applying Eq.~\eqref{eq:theory-sharing-gap} blockwise proves the result.
\end{proof}

Separating only FFN leaves the Attention contribution in Eq.~\eqref{eq:theory-partial-gap}. This quantifies the residual sharing restriction in the partial-separation variants, provided that tying their matching parameters reproduces the same forward computation at $x_0$.

\endgroup

\section{Future Work}
\label{sec:future-work}

\noindent\textbf{World action models.}\enspace
Our observation of gradient conflict at TF initialization motivates exploring role-specific parameterization from the teacher-forcing stage. World action models (WAMs)~\cite{ye2026dreamzero,li2026lingbotva,wang2026repwam,peng2026fact}, which commonly rely on teacher-forced training, provide a natural setting for this direction. Separating context writing from visual and action prediction may enable more effective learning of historical representations for subsequent predictions.

\par\smallskip
\noindent\textbf{Efficient context writers and KV compression.}\enspace
Role-specific parameterization motivates asymmetric architectures, with model capacity tailored to the demands of context writing and denoising. A lightweight context writer could learn compressed KV representations from redundant video histories under a constrained cache budget. Such writer-side adaptation could be optimized through future generation losses while keeping the denoiser's parameters and attention interface unchanged. The goal is to reduce parameter storage, cache memory, and attention cost while preserving the denoiser's generative capabilities and long-horizon consistency.

\end{document}